\newcommand{\PP}{\mathbb{P}}
\newcommand{\RR}{\mathbb{R}}
\newcommand{\NN}{\mathbb{N}}
\newcommand{\CC}{\mathbb{C}}
\newcommand{\Sbb}{\mathbb{S}}
\newtheorem{theorem}{Theorem}
\newtheorem{lemma}[theorem]{Lemma}
\newtheorem{corollary}[theorem]{Corollary}
\newtheorem{proposition}[theorem]{Proposition}
\newtheorem{definition}[theorem]{Definition}
\theoremstyle{remark}
\newtheorem{remark}[theorem]{Remark}
\theoremstyle{definition}
\newtheorem{example}[theorem]{Example}
\newcommand{\conv}{\ensuremath{\operatorname{conv}}}
\newcommand{\Ecal}{\mathcal{E}}
\newcommand{\Mcal}{\mathcal{M}}
\newcommand{\Kcal}{\mathcal{K}}
\newcommand{\Xcal}{\mathcal{X}}
\newcommand{\Scal}{\mathcal{S}}
\newcommand{\Ccal}{\mathcal{C}}
\newcommand{\C}{\mathcal{C}}
\newcommand{\Zcal}{\mathcal{Z}}
\newcommand{\R}{{\RR}}
\newcommand{\ra}{\rightarrow}
\newcommand{\supp}{\operatorname{supp}}
\newcommand{\RBM}{\ensuremath{\operatorname{RBM}}}
\newcommand{\DBN}{\ensuremath{\operatorname{DBN}}}
\newcommand{\ZMP}{\operatorname{ZMP}}
\newcommand{\be}{\boldsymbol{\operatorname{e}}}
\newcommand{\Dir}{\operatorname{Dir}}
\newcommand{\ba}{\boldsymbol{\alpha}}
\title{Kernels and Submodels of Deep Belief Networks}
\author{
Guido F. Mont\'ufar \\
Department of Mathematics\\
Pennsylvania State University\\
University Park, PA 16802 \\
\texttt{gfm10@psu.edu} \\
\And
Jason Morton \\
Department of Mathematics\\
Pennsylvania State University\\
University Park, PA 16802 \\
\texttt{morton@math.psu.edu} \\
}
\begin{document}

\maketitle 

\begin{abstract}
We study the mixtures of factorizing probability distributions represented as visible marginal distributions in stochastic layered networks. 
We take the perspective of kernel transitions of distributions, which  gives a unified picture of distributed representations arising from  Deep Belief Networks (DBN) and other networks without lateral connections. 
We describe combinatorial and geometric properties of the set of kernels and products of kernels realizable by DBNs as the network parameters vary. 
We describe explicit classes of probability distributions, including exponential families, that can be learned by DBNs. 
We use these submodels to bound the maximal and the expected Kullback-Leibler approximation errors of DBNs from above depending on the number of hidden layers and units that they contain. 
\end{abstract}

\section{Introduction}
{\em Deep belief networks} (DBNs) are a kind of learning machine introduced originally in~\cite{Hinton2006}. They are used to extract {\em features} from data, often by an unsupervised pretraining step, so their properties as generative models and their expressive  power are also of interest, see~\cite{BengioDelalleauExpressiveDeep11,Hinton:2008,LeRoux2010,Montufar2011}. 
A DBN can be seen as a concatenation of  modules  that implement  kernel transitions (stochastic linear maps) of probability vectors. 
We describe this perspective in Section~\ref{subsection:ZMP}, and the geometry and combinatorics of the set of kernels that DBNs can represent, in Section~\ref{section:geozono}. See Figure~\ref{DBNlayerfig}. 

The deep belief network probability model $\DBN(n_0, n_1,\ldots,n_l)$  with layers of widths $n_0,\ldots,n_l$ is the set  of marginals 
$P(h^0)=\sum_{h^1\in\{0,1\}^{n_1}}\cdots\sum_{h^l\in\{0,1\}^{n_l}}P(h^0,h^1,\ldots,h^l)$ for all $h^0\in\{0,1\}^{n_0}$, of all joint probability distributions on the states of a layered network. 
The top layer has bipartite undirected connections, with subsequent layers bipartite and downward-directed, giving joint unmarginalized probabilities: 
\begin{align}
 P(h^0,h^1, \ldots,h^l) &= \Big(\prod_{k=1}^{l-1} P(h^{k-1}|h^{k})\Big) P(h^{l-1},h^l)   \;, \label{jointDBN}
\intertext{for all $(h^0,\ldots,h^l)\in\{0,1\}^{n_0}\times\cdots\times\{0,1\}^{n_l}$, where}
\phantom{\text{and}} P(h^{l-1},h^l) &= \frac{1}{Z}\exp\big( h^l  B^l  +   h^l  W^{l} h^{l-1}   +  B^{l-1}  h^{l-1} \big)\;, \text{ and}\label{dbnrbm} \\
P(h^{k-1}|h^{k}) &= \frac{1}{Z_{h^{k}}}\exp\big( h^{k} W^{k} h^{k-1}  +  B^{k-1} h^{k-1} \big)\;.\label{dbncondi}
\end{align}
Here $h^k =  (h^k_1,\ldots,h^k_{n_k}) \in \{0,1\}^{n_k}$ denotes the states of the units in the $k$th layer;  
$W^{k}\in\RR^{n_k\times n_{k-1}}$ is a matrix of  connection weights  between units from the $k$th and $(k-1)$th layer;  
 $B^k\in{\RR}^{n_k}$ is a vector of   bias weights  of the  units in the $k$th layer; 
   $Z=\sum_{h^{l-1}, h^l}\exp(h^{l}W^l h^{l-1} + B^{l-1}  h^{l-1} + b^l h^l )$ is a normalization constant that depends on $W^l,B^{l-1},B^l$; 
and $Z_{h^{k+1}}=\sum_{h^k}\exp( h^{k+1}  W^{k+1} h^{k} + B^{k}  h^{k})$ is a normalization constant that depends on $W^{k+1},B^k$, and $h^{k+1}$. 
The total number of parameters of this model is  $d=(\sum_{k=1}^{l} n_{k-1}  n_k) + (\sum_{k=0}^l n_k)$, treating the layer widths $n_0,\ldots,n_l$ as hyperparameters. 

\begin{figure}
\begin{center}
\setlength{\unitlength}{.7cm}
\begin{picture}(8.8,5)(-0.4,-1.2)
\put(0,0){\includegraphics[trim=5.4cm 22.35cm 11.5cm 4cm, clip=true, width=8\unitlength]{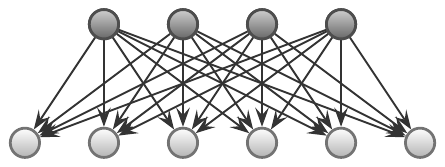}} 
\put(-2,1.25){$W\in\R^{m\times n}$}
\put(-2,0.15){$B\in\R^{n}$}
\put(0,3.05){\begin{minipage}{8\unitlength}\center$\Mcal$\end{minipage}}
\put(0,-.8){\begin{minipage}{8\unitlength}\center$\Mcal\cdot\Kcal_{m,n}$\end{minipage}}
\put(7.3,.9){$\left. \begin{array}{c} {}\\ {}\\ {}\\ {}\end{array}\right\} K_{W,B}$}
\end{picture}
\setlength{\unitlength}{.6cm}
\begin{picture}(7.5,6.4)(0,0)
\put(2,-.3){\includegraphics[trim=7.5cm 10.9cm 7.4cm 5cm, clip=true,width=7.5\unitlength]{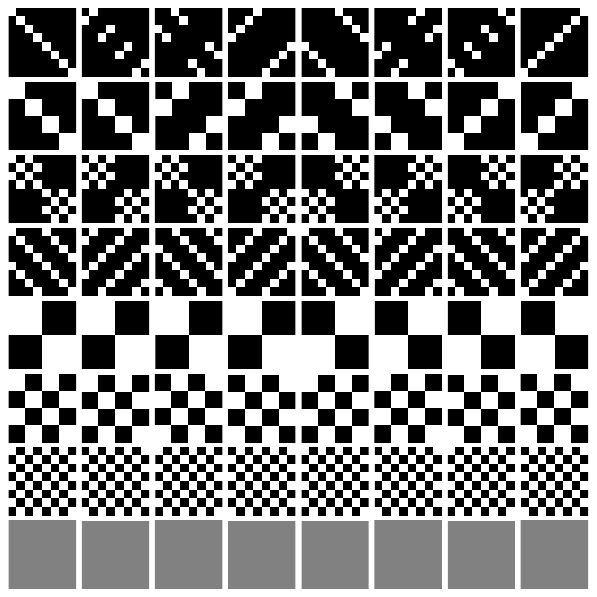}}
\end{picture}
\end{center}
\caption{Left: A network module that realizes stochastic transitions $\Kcal_{m,n}$ from the set of distributions $\Mcal\subseteq\Delta_{2^m-1}$ on  the top layer, to probability distributions  $\Mcal\cdot\Kcal_{m,n}\subseteq\Delta_{2^n-1}$ on the bottom layer, see eq~\eqref{zonosetkerneldef}. 
Right: The kernels $K_{W,B}=\operatorname{K}_p$ in $\Kcal_{3,3}\subset\R^{8\times8}$ described in Proposition~\ref{markovkernels}. 
}\label{DBNlayerfig}
\end{figure}

A {\em restricted Boltzmann machine} (RBM)~\cite{Smolensky1986,Freund1992,Hinton2002} is formally the same as a DBN with only one hidden layer. The model $\RBM_{n,m}=\DBN(n,m)$ is the set of probability distributions on $\{0,1\}^n$ of the form $
 P(v) = \frac{1}{Z}\sum_{h\in\{0,1\}^m} \exp\big(  h  W v  +   C h  +  B  v \big)$ for all $v\in\{0,1\}^n$. 

We denote by $\Delta_{2^n-1}$ the simplex of probability distributions on $\{0,1\}^n$. 
Its vertices are the point measures $\delta_x$, $x\in\{0,1\}^n$. 

Sutskever and Hinton~\cite{Hinton:2008} showed that a very deep and narrow DBN, with $\sim 3\cdot 2^n$ hidden layers of width $(n+1)$, can approximate any distribution on $\{0,1\}^n$ arbitrarily well. 
Le~Roux and Bengio~\cite{LeRoux2010} improved this bound showing that $\sim \frac{2^n}{n}$ layers of width $n$ suffice. 
Mont\'ufar and Ay~\cite{Montufar2011} improved that bound again to $\sim\frac{2^n}{2n}$.  
We are interested in the expressive power of DBNs which have less than $2^n-1$ parameters  and  cannot approximate every probability distribution arbitrarily well. 
In~\cite{NIPS2011_0307} the maximal Kullback-Leibler approximation errors of RBMs were bounded from above by studying submodels of RBMs.

\begin{definition}
A {\em submodel} of a DBN with layer widths $n_0,\ldots,n_l$ is a set of probability distributions in $\Delta_{2^{n_0}-1}$ contained in $\DBN(n_0,\ldots,n_l)$. 
\end{definition}

Approaches to find explicit submodels of DBNs include studying
\vspace{-.2cm}
\begin{itemize}[leftmargin=1.3em]\setlength{\itemindent}{0em}
\item
The set $\DBN(n_0,\ldots,n_l)$ as a mixture of conditional distributions with mixing distributions from the imbedded model $\DBN(n_1,\ldots,n_l)$. 
This approach was proposed in~\cite{Montufar2010} and used in~\cite{MontufarMorton2012} to study the expressive power of RBMs. 
In Section~\ref{subsection:ZMP} we describe distributed mixtures of product distributions arising in layered networks. 
\item
Models arising from {\em probability sharing} on RBMs. This idea has been used in~\cite{Hinton:2008,LeRoux2010,Montufar2011} to study universal approximation of probability distributions by DBNs. 
To study submodels of DBNs, one imposes constraints on the number and type of sharing steps (the number and widths of the hidden layers). The submodels are sub-simplicial-complexes of $\Delta_{2^n-1}$. 
In Section~\ref{sharing} we discuss certain faces of the probability simplex that can be represented by deep and narrow DBNs. 
\item
The set of joint probability distributions on the states of all units of a DBN and their linear projections (by marginalization maps). 
\item
Graphical submodels of the DBN such as RBMs and trees.
\end{itemize}
Understanding these items is helpful to lower bound the capabilities of deep belief networks. 

The marginal probability distributions on the states of the visible units of  a stochastic network  with no direct connections between visible units, are mixtures of product distributions. 
We call a mixture {\em distributed} when the mixture components share parameters in some way. 
Distributed  representations have been  discussed in \cite{Hinton99productsof,Bengio-2009,MontufarMorton2012}. 
Each layer of a DBN defines  a distributed mixture of product distributions. 
Similarly, each layer of a {\em deep Boltzmann machine} (DBM) and a {\em directed RBM} define a distributed mixture of product distributions. 
A DBM is a layered network with undirected bipartitie connections between units in subsequent layers, see~\cite{salakhutdinov2009deep}. 
The DBM model is the set of marginal distributions on the states of the variables in the bottom layer. 
The model $\RBM_{n,m}^{\text{dir}}$ is the set of visible distributions of a  pair of layers of binary units with  directed connections from the top layer to the bottom layer, including top and bottom bias weights, and without connections within each layer, as shown in Figure~\ref{DBNlayerfig}.

In Section~\ref{subsection:ZMP} we discuss the mixtures of product distributions represented by layered networks. 
In Section~\ref{section:geozono} we study the geometry of the set of all stochastic transitions that can be realized by DBN layers. 
In Section~\ref{section:mainresults} we derive upper bounds on the maximal and mean approximation errors of DBNs. 
Section~\ref{section:discussion} presents a discussion of our results. 
All formal proofs of mathematical statements are deferred to the Appendix.

\begin{figure}
\begin{center}
\setlength{\unitlength}{.7cm}
\begin{picture}(6,5.4)(0,0)
\put(0,-.55){\includegraphics[trim=7cm 9.2cm 6.5cm 9cm, clip=true, width=5\unitlength]{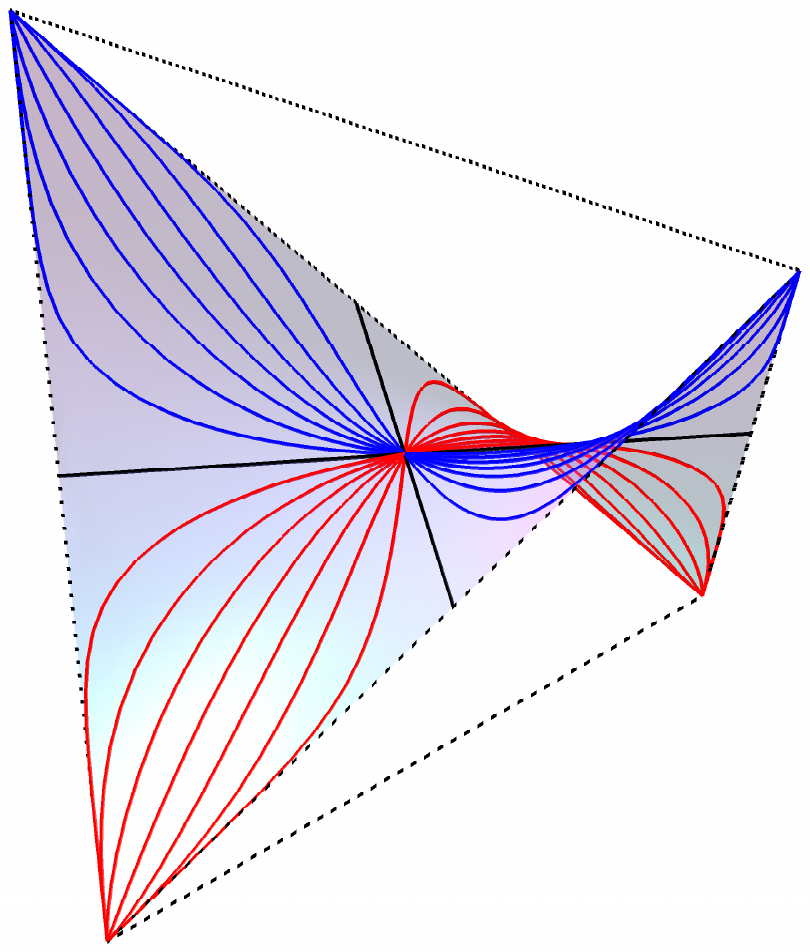}} 
\end{picture}
\begin{picture}(6,5)(0,0)
\put(5,0){\begin{rotate}{90}\includegraphics[trim=7.5cm 10.5cm 5.5cm 10.2cm, clip=true, height=5\unitlength]{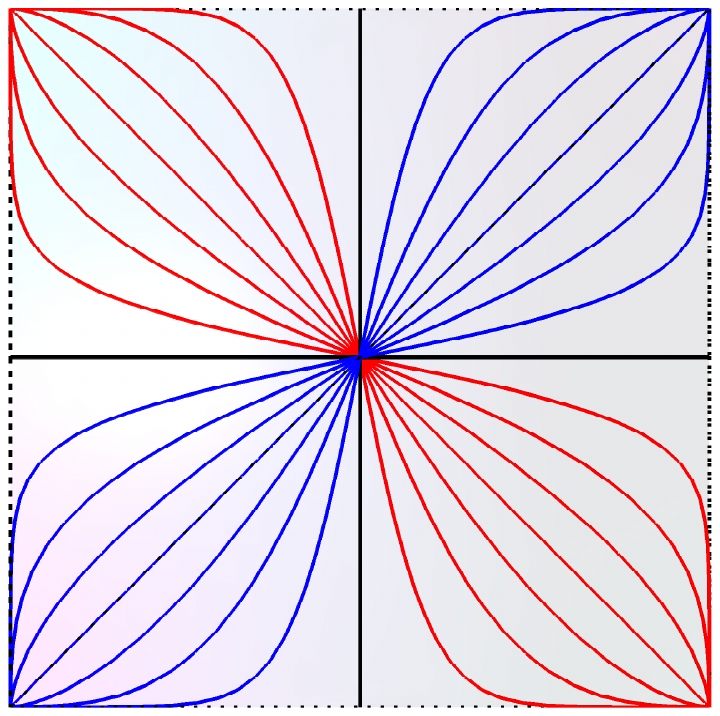}\end{rotate}} 
\end{picture}
\begin{picture}(5,5)(0,0)
\put(0,-.05){\includegraphics[trim=8.25cm 11.35cm 7.9cm 11cm, clip=true, width=5\unitlength]{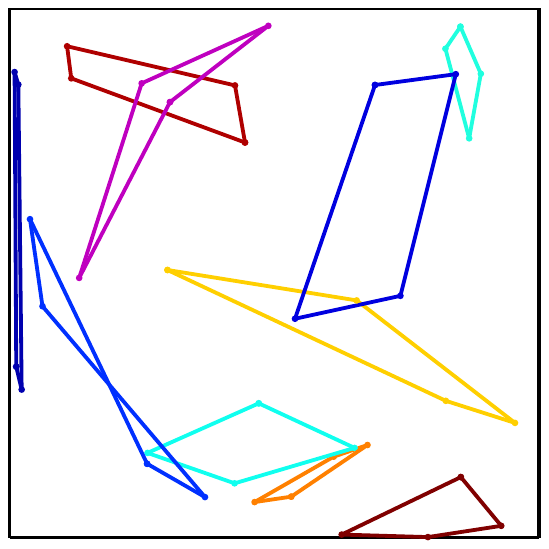}} 
\end{picture}
\end{center}
\caption{Left: Two-bit product distributions with straight lines $\alpha B$, $\alpha\in\RR$ as natural parameters for various choices of $B\in\RR^2$.  Middle: Linear projection of the left figure into the convex support of the two-bit independence model. 
Right: Linear projection of $10$ $(2,2)$-zonoset tuples of product distributions with random  $W\in\R^{2\times2},B\in\R^2$. 
}\label{zonosetsinconvsupp}
\end{figure}

\section{Distributed mixtures of products and stochastic kernels}\label{subsection:ZMP}

An exponential family is a set of probability distributions of the form $\Ecal_V=\{p\propto\exp(f)\colon f\in V\}$, where $V$ is an affine space of functions on the set of elementary events. 
The set of all strictly positive product distributions of $n$ binary variables is an $n$-dimensional exponential family, denoted by $\Mcal_n\subseteq\Delta_{2^n-1}$, with elements $p_B(v_1,\ldots,v_n)=\prod_{i=1}^n p_{B_i}(v_i)=\exp(  B  v  )/Z_B$, $Z_B=\sum_{v\in\{0,1\}^n}\exp(  B  v )$. Here  $B\in\R^n$ is called the {\em natural parameter} vector of $p_B$. 
The convex support of this model is an $n$-dimensional hypercube with points in one-to-one correspondence with the points in the closure $\overline{\Mcal_n}$ of $\Mcal_n$. See~\cite{Brown86:Fundamentals_of_Exponential_Families}. 

The {\em $k$-mixture} of product distributions of $n$ binary variables is
\begin{equation}
\Mcal_{n,k}:=\{\sum_{j=1}^k\lambda_j p^{(j)}\colon p^{(j)}\in\Mcal_n, \lambda_j\geq0 \;\forall j, \text{ and }\sum_{j=1}^k\lambda_j=1 \} \;. 
\end{equation}
This set has the dimension expected from counting parameters, $\dim(\Mcal_{n,m})=\min\{2^n-1, m n +m-1\}$, unless $n=4$ and $m=3$, see~\cite{Catalisano2011}. 

The marginal visible probability distributions of DBNs, DBMs, directed RBMs, and RBMs with $n$ binary visible units and $m$ binary units in the first hidden layer, all have the following form:   
\begin{gather}
p(v)=\sum_{h\in\{0,1\}^{m}} p_{hW+B}(v)\,q(h)\quad \forall v\in\{0,1\}^n,\quad\text{where}  \label{zonodbn}\\
p_{hW+B}(v) = \frac{1}{Z_{h}}\exp( ( h W +B ) v ) \quad \forall v\in\{0,1\}^n,\quad\forall h\in\{0,1\}^m\;, \label{zonoprod}
\end{gather} 
with $Z_h=\sum_{v\in\{0,1\}^n}\exp(( h  W+B) v )$, $W\in{\RR}^{m\times n}$, $B\in{\RR}^{n}$,  
and $q$ is a probability distribution on $h\in\{0,1\}^m$.

The natural parameters $\Zcal=\{ h W +B\colon h\in\{0,1\}^m\}$,  with  $W\in\R^{m\times n}$ and $B\in\R^n$, of the $2^m$ product distributions $\{p_{hW+B}\colon h\in\{0,1\}^m\}$, are a multiset (a set with repetitions allowed) of points in $\R^n$ called an  $(m,n)$-{\em zonoset}. 
In the literature of polytopes the convex hull of a zonoset is known as {\em zonotope}. 
\begin{definition}
We call $\{p_{hW+B}\colon h\in\{0,1\}^m\}$  the {\em zonoset tuple} of product distributions  associated to the zonoset $\Zcal=\{hW+B\colon h\in\{0,1\}^m\}$. 
\end{definition}
The number of parameters of a zonoset tuple is $(m+1)n$, while $2^m n$ parameters are needed for describing an arbitrary tuple of $2^m$ product distributions. 
Any $(m,n)$-zonoset-tuple of product distributions is contained in an exponential subfamily of $\Mcal_{n}$ of dimension $\min\{m,n\}$. 
Figure~\ref{zonosetsinconvsupp} illustrates zonoset tuples of product distributions on $\{0,1\}^2$.

We can view eq.~\eqref{zonodbn} as a transition of the marginal distribution $q$ on the states of the first hidden layer, to the visible distribution $p$, by a stochastic kernel: 
\begin{gather}
p = q \cdot K_{W,B}\;, 
\intertext{where the kernel, called an {\em $(m,n)$-zonoset kernel}, is defined by the $2^m\times2^n$-matrix with entries}
K_{W,B}(h,v):= p_{hW+B}(v)\quad\text{ for all $h\in\{0,1\}^m$ and all $v\in\{0,1\}^n$}\;. \label{zonosetkerneldef}
\end{gather}
Thus a zonoset tuple is the rows of a zonoset kernel viewed as a set.  
Each $K_{W,B}$ is a (row) stochastic matrix describing a linear map 
\begin{equation*}
K_{W,B}\colon \Delta_{2^m-1}\to \conv\{K_{W,B}(h,\cdot)\}_h\subseteq \Delta_{2^n-1}\;;\; q\mapsto p\cdot K_{W,B}\;.
\end{equation*}  
We denote the set of all  $(m,n)$-zonoset kernels  by 
\begin{equation*}
\Kcal_{m,n}:=\{K_{W,B}\colon W\in\R^{m\times n}, B\in\R^n\}\;.
\end{equation*} 
We write $\overline{\Kcal_{m,n}}$ for the set of all kernels that can be expressed as the limit of a sequence $K_{W_i,B_i}\in\Kcal_{m,n}$, $i\in\NN$. 

\begin{figure}
\begin{center}
\setlength{\unitlength}{.6cm}
\begin{picture}(9,7.4)(0,0)
\put(0,0){\includegraphics[trim=6cm 10cm 6cm 10cm, clip=true, width=8\unitlength]{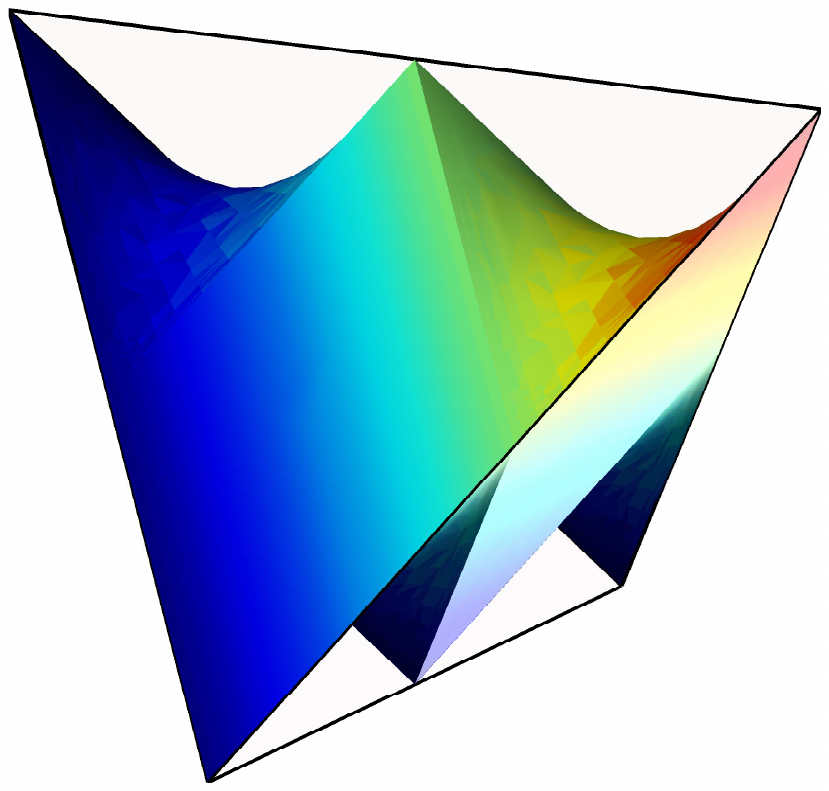}} 
\put(1.7,-.35){$\delta_{(00)}$}
\put(6,1.1){$\delta_{(11)}$}
\put(.1,6.8){$\delta_{(01)}$}
\put(7.7,5.8){$\delta_{(10)}$}
\put(4,6.5){$\tfrac12(\delta_{(01)}+\delta_{(10)})$}
\end{picture}
\end{center}
\caption{The set $u\cdot\Kcal_{1,2}=\{u \cdot K_{W,B}\colon W\in\R^{1\times2}, B\in\R^{1\times2}\}\subset\Delta_3$, where $u=(1/2,1/2)$. 
}\label{uK}
\end{figure}

The {\em input} distributions $q$ in eq.~\eqref{zonodbn} are restricted in different ways for each model:  
\vspace{-.2cm}
\begin{itemize}[leftmargin=1.3em]\setlength{\itemindent}{0em}
\item For DBNs $q\in \DBN(n_1,\ldots, n_l)$, and $\DBN(n_0,\ldots,n_l)=\DBN(n_1,\ldots,n_l)\cdot\Kcal_{n_1,n_0}$; in particular a DBN with layers of constant width is given by $\RBM_{n,n} \cdot \Kcal_{n,n}^{l-2}$. 
\item For directed RBMs $q\in\Mcal_m$, and $\RBM_{n,m}^{\text{dir}}= \Mcal_m\cdot\Kcal_{m,n}$. 
\item For RBMs $q\in \{\frac1Z\sum_{v}\exp( ( h  W +B)  v  +   C h )\colon C\in\R^m\}$. 
\item For DBMs $q\in\{\frac{Z_h}{Z} \sum_{h^2,\ldots,h^l} \prod_{k=1}^{l-1}\exp( ( h^{k+1} W^{k+1}+B^k) h^k ) \exp(  B^l h^l )\}$. 
\end{itemize}
In the case of RBMs and DBMs $q$ is subject to ``feedback'' from the visible units and depends on $W$ and $B$, while for DBNs and directed RBMs $q$ is independent from these parameters. 
The $2^m$ product distributions $p_{hW+B}$, $h\in\{0,1\}^m$, which we summarized in the rows of $K_{W,B}$,  however are the same for all these models. 
The smallest model which contains all models of the form $\Mcal\cdot\Kcal_{m,n}$ 
is the {\em $(m,n)$-zonoset mixture of products} (ZMP), defined by $\ZMP_{n,m}:=\Delta_{2^{m}-1}\cdot\Kcal_{m,n}$, or more explicitly: 
\begin{align}
\ZMP_{n,m} : = &\Big\{\sum_{h\in\{0,1\}^m} \!\!\!\lambda_h\, p_{hW+B} \;\Big|  \; \text{\small $\lambda_h\geq0, \sum_h\lambda_h=1$}, W\in\R^{m\times n}, B\in\R^n\Big\}\;.
\end{align} 
DBNs and DBMs are ``cut out'' from ZMPs by their specific constraints on the mixture weights $q(h)$. 
The mixture weights $q$ of $\DBN(n_0,n_1,\ldots,n_l)$ can be chosen arbitrarily and the model is  equal to $\ZMP_{n_0,n_1}$  only if $\DBN(n_1,\ldots,n_l)$ is a universal approximator on $\{0,1\}^{n_1}$. 

ZMPs are submodels of very large mixtures of products; $\ZMP_{n,m}\subseteq\Mcal_{n,2^m}$, for all $n$ and $m$. 
By results from~\cite{MontufarMorton2012}, $\Mcal_{n,2^m}$ is also the smallest mixture of products that contains $\RBM_{n,m}$ and thus $\ZMP_{n,m}$, when $4\lceil m/3\rceil\leq n$. 
On the other hand, each zonoset tuple shares the parameters $W$ and $B$, and the largest mixture of products contained in a ZMP is possibly relatively small. 
The total number of parameters of $\ZMP_{n,m}$ is $(m+1)n + 2^m-1$. 
We note that $\Mcal_{n,m+1}\subseteq\ZMP_{n,m}$ for all $n$ and $m$, and $\Mcal_{n,k}\not\subseteq\ZMP_{n,m}$ when $k>\frac{(m+1)n+2^m}{(n+1)}$ (by counting parameters).

\begin{example}
If the {\em input} $q$ is a point measure $\delta_{h}$, then the output is just the $h$th row  $q\cdot K_{W,B}=p_{hW+B}$ of the kernel. 
In particular $\delta_h\cdot \Kcal_{m,n}=\Mcal_{n}$ for any $h$. 
If the input is the uniform distribution $u$ on $\{0,1\}^m$, then the output $p=q\cdot K_{W,B}$ is the arithmetic mean of a zonoset tuple. 
Figure~\ref{uK} illustrates this set for one hidden  and two visible units. 
\end{example}

\section{Geometry and combinatorics of zonoset kernels}\label{section:geozono}

A {\em face} or a {\em cylinder set} of the $n$-cube is a maximal set of binary vectors of length $n$ with fixed values in a set of coordinates $I\subseteq[n]$. 
We write $[{h^\ast_I}]=\{h\in\{0,1\}^n \colon  h_I = h^\ast_I\}$ for the $(n-|I|)$-dimensional face with fixed values $h_i=h^\ast_i$ for all $i\in I$. 
We write $a \oplus_2 b$ for $a+ b \mod 2$. 
Given a vector $h\in\{0,1\}^m$ and a subset $I\subset[m]$, we write $h_I$ for a vector in $\{0,1\}^I$, or for the vector with entries $(h_I)_i=h_i$ if $i\in I$ and $(h_I)_i=0$ if $i\not\in I$. 
The support of a probability distribution $p$ defined on a set $\Xcal$ is $\operatorname{supp}(p):=\{x\in \Xcal\colon p(x)>0\}$.

We start showing that certain classes of kernels can be realized as zonoset kernels. 
Let $n=m$. 
Given any $p\in\Delta_{2^m-1}$, let $\operatorname{K}_p(h,v):=p(h\oplus_2 v )$. 
The rows of $\operatorname{K}_p$ are permuted versions of the probability distribution $p$. 
Figure~\ref{DBNlayerfig} illustrates the set of all kernels $\operatorname{K}_p$ with $p$ uniformly distributed on faces of $\{0,1\}^3$. 
The  {\em mixing times} of these kernels have been studied in the context of Markov chains on finite groups, see~\cite{kesten2003probability}. 

\begin{proposition}\label{markovkernels}
Let $p$ be any product distribution with support on any face of $\{0,1\}^n$ with fixed coordinates $I\subseteq[n]$, and let $\operatorname{K}_p(h,v)=p(h\oplus_2 v)$ for $v,h\in\{0,1\}^n$. 
Then there is a zonoset kernel $K_{W,B}\in\overline{\Kcal_{n,n}}$ with 
$K_{W,B}(h,v)=\operatorname{K}_p(h,h_{I^c}\oplus_2 v)$ for all $h,v\in\{0,1\}^n$, 
and in particular: 
\vspace{-.2cm}
\begin{itemize}[leftmargin=1.3em]\setlength{\itemindent}{0em}
\item 
$\supp (K_{W,B}(h,\cdot))=\supp (\operatorname{K}_p(h,\cdot))$ for all $h$, 
\item 
$K_{W,B}(h,\cdot)=\operatorname{K}_p(h,\cdot)$ for all $h$ with $\supp (h) \subseteq I$, e.g., for $h=(0,\ldots,0)$, 
\item 
If $p$ is uniformly distributed on a face of $\{0,1\}^n$, then $K_{W,B}=\operatorname{K}_p$. 
\end{itemize}
\end{proposition}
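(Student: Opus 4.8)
The plan is to exhibit the weight matrix $W$ and bias vector $B$ explicitly and then verify, via a limit, that the resulting zonoset kernel has the claimed rows. The natural starting point is to rewrite the target rows as product distributions in $v$. Since $p$ is a product distribution supported on the face $[h^*_I]$, I would write $p=\prod_i p_i$ where $p_i=\delta_{h^*_i}$ for $i\in I$ and $p_i$ is a Bernoulli factor with $0<p_i(1)<1$ for $i\in I^c$. Expanding the definition coordinatewise and using that on $i\in I^c$ the two shifts $h_i$ and $(h_{I^c})_i=h_i$ cancel, while on $i\in I$ we have $(h_{I^c})_i=0$, gives
\[
\operatorname{K}_p(h,\,h_{I^c}\oplus_2 v)=\prod_{i\in I}p_i(h_i\oplus_2 v_i)\,\prod_{i\in I^c}p_i(v_i).
\]
As a function of $v$ this is again a product distribution: it is deterministic with value $v_i=h_i\oplus_2 h^*_i$ on the coordinates $i\in I$, and equals the fixed Bernoulli factor $p_i$, independent of $h$, on the coordinates $i\in I^c$.

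Next I would realize this product distribution as a row of a zonoset kernel by taking $W$ diagonal. On the free coordinates $i\in I^c$ set $W_{ii}=0$ and $B_i=\log\!\big(p_i(1)/p_i(0)\big)$, so the $i$th natural parameter is the fixed finite logit, regardless of $h$. On the fixed coordinates $i\in I$ set $W_{ii}=2t(1-2h^*_i)$ and $B_i=t(2h^*_i-1)$, so the $i$th natural parameter is $(hW+B)_i=t\big(2(h_i\oplus_2 h^*_i)-1\big)$; this tends to $+\infty$ exactly when $h_i\oplus_2 h^*_i=1$ and to $-\infty$ otherwise. Letting $t\to\infty$ drives the $i$th factor to the point mass at $v_i=h_i\oplus_2 h^*_i$, so $K_{W,B}$ converges to the desired kernel, which therefore lies in $\overline{\Kcal_{n,n}}$.

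Then the three bullet points would follow by inspection. The support claim holds because the constructed row is supported on $\{v:v_i=h_i\oplus_2 h^*_i\text{ for }i\in I\}$, which is exactly $\supp(\operatorname{K}_p(h,\cdot))=\{v:p(h\oplus_2 v)>0\}$. When $\supp(h)\subseteq I$ we have $h_{I^c}=0$, so $\operatorname{K}_p(h,h_{I^c}\oplus_2 v)=\operatorname{K}_p(h,v)$. Finally, if $p$ is uniform on the face, then each $p_i$ with $i\in I^c$ is the uniform Bernoulli, so flipping the coordinates of $v$ indexed by $I^c$ leaves $p$ unchanged; hence $\operatorname{K}_p(h,h_{I^c}\oplus_2 v)=\operatorname{K}_p(h,v)$ for all $h,v$, giving $K_{W,B}=\operatorname{K}_p$.

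The main obstacle will be the limiting argument on the fixed coordinates: one must check that the diagonal choice of $W$ couples each $v_i$ to the single hidden coordinate $h_i$, so that no off-diagonal interactions interfere, and that the joint $t\to\infty$ limit produces the exact point masses simultaneously on all $i\in I$ while leaving the $I^c$ factors untouched. The conceptual point that makes this work is that, after the $h_{I^c}$ correction, the exclusive-or shift $p(h\oplus_2 v)$ acts on the fixed coordinates precisely as a sign flip tying $v_i$ to $h_i$ — which is exactly the behaviour that a large diagonal weight $W_{ii}$ realizes in the closure of the zonoset kernels.
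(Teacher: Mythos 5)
Your proof is correct and follows essentially the same route as the paper's: a diagonal $W$ supported on the fixed coordinates $I$ whose entries are scaled to $\pm\infty$ (with signs set by $h^*_I$ so that the limit ties $v_i$ to $h_i\oplus_2 h^*_i$), combined with finite biases on $I^c$ equal to the logits of the free Bernoulli factors of $p$ --- the paper likewise first realizes the uniform-on-face kernel via such a limit and then adds the natural parameter $C_{I^c}$ of $p$ to the bias. The coordinatewise factorization you use to justify the simultaneous limit is exactly why the paper's argument also goes through, so no gap remains.
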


The following propositions show that the set $\Kcal_{m,n}$ has the dimension expected from parameter counting, and that its elements are generically full rank matrices. 
\begin{proposition}\label{toric}
The set of kernels $\Kcal_{m,n}$ is a multigraded toric variety. 
\end{proposition}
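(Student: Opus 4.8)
The plan is to exhibit $\Kcal_{m,n}$ as the Zariski closure of the image of a monomial (Laurent) parametrization and then to invoke the standard characterization of toric varieties as closures of images of algebraic tori under monomial maps. First I would substitute torus coordinates $w_{ji}=e^{W_{ji}}$ and $b_i=e^{B_i}$, so that the parameter space $\R^{m\times n}\times\R^n$ maps onto the positive part of the algebraic torus $T=(\CC^\ast)^{mn+n}$. Writing $t_{h,i}:=e^{(hW+B)_i}=b_i\prod_{j=1}^m w_{ji}^{h_j}$, the factorization of product distributions gives, for each row $h$ and column $v$,
\[
K_{W,B}(h,v)=\prod_{i=1}^n\frac{t_{h,i}^{\,v_i}}{1+t_{h,i}}=\frac{1}{Z_h}\prod_{i=1}^n t_{h,i}^{\,v_i},\qquad Z_h=\prod_{i=1}^n(1+t_{h,i}).
\]
The numerator $\prod_i t_{h,i}^{v_i}=\prod_{i}\big(b_i\prod_j w_{ji}^{h_j}\big)^{v_i}$ is a monomial in the torus coordinates, whereas the denominator $Z_h$ is not. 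The central point is that $Z_h$ only rescales the $h$-th row, so it disappears once one passes to the correct ambient space.

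Concretely, since each $K_{W,B}$ is row-stochastic I would regard its rows as points of the simplex $\Delta_{2^n-1}$ and view the whole kernel inside the $2^m$-fold product of projective spaces $\prod_{h\in\{0,1\}^m}\PP^{2^n-1}$, one factor per row. In the factor indexed by $h$, the homogeneous coordinates of the image point are $\big[\prod_i t_{h,i}^{v_i}\big]_{v\in\{0,1\}^n}$, and passing to projective coordinates automatically quotients out the scaling by $1/Z_h$. Thus the map $T\to\prod_h\PP^{2^n-1}$ sending the parameters to the tuple of rows is a monomial map, encoded by the integer matrix $A$ whose column indexed by the pair $(h,v)$ has entry $h_jv_i$ in the $w_{ji}$-row and $v_i$ in the $b_i$-row. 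Assigning $\deg x_{h,v}=e_h\in\mathbb{Z}^{2^m}$ equips the Cox ring $\CC[x_{h,v}:h,v]$ of $\prod_h\PP^{2^n-1}$ with the product ($\mathbb{Z}^{2^m}$-)multigrading, and $A$ is homogeneous with respect to it block by block, all columns in the $h$-block carrying the same degree $e_h$.

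It then remains to identify the closure of the image. Because this closure is the image of an irreducible torus under a monomial map, it is irreducible and its multihomogeneous ideal in $\CC[x_{h,v}]$ equals the toric ideal $I_A=\ker\big(x_{h,v}\mapsto t^{a_{(h,v)}}\big)$, which is generated by the multihomogeneous binomials $x^{u^+}-x^{u^-}$ ranging over $u\in\ker_{\mathbb{Z}}A$ (Sturmfels; Cox--Little--Schenck). A binomial multihomogeneous ideal in a product of projective spaces defines exactly a multigraded toric variety, which is the claim; the realizable kernels $\Kcal_{m,n}$ are recovered as the positive real points in the standard affine chart, and $\overline{\Kcal_{m,n}}$ as their closure. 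The only real subtlety is the bookkeeping forced by the normalizations $Z_h$: one must check that the per-row scalings are precisely the ambiguity removed by the product-of-projective-spaces structure, so that no extra non-binomial relations among distinct rows appear. Verifying that the $2^m$ rows impose no constraints beyond those captured by $I_A$ — equivalently, that the toric ideal genuinely respects the full product grading — is where I expect the argument to need the most care.
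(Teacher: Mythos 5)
Your proof is correct and follows essentially the same route as the paper: exponentiate the parameters to get a monomial map, place each row in its own copy of $\PP^{2^n-1}$ so that the normalizations $Z_h$ are absorbed by the projective scaling, and identify the Zariski closure of the image as the multigraded toric variety cut out by the multihomogeneous binomials of the toric ideal. The final ``subtlety'' you flag is handled by the standard fact that the multihomogeneous vanishing ideal of the closure of a monomial torus image in a product of projective spaces is exactly the toric ideal of the exponent matrix, so no extra care is needed there.
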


\begin{remark}
Let $V$ denote a sufficient statistics of the $n$-bit independence model, e.g., a matrix with columns the elements of $\{0,1\}^n$, and let $H$ be the $(m+1)\times 2^m$-matrix with columns $\{(1,h)\}_{h\in\{0,1\}^m}$. 
Consider the exponential family $\Ecal_{H\otimes V}$ with sufficient statistics $H\otimes V$ on $\Xcal=\{0,1\}^{n+m}$. Let $\Xcal_h=\{(v,h')\in\Xcal\colon h'=h\}\cong\{0,1\}^n$ for all $h$. 
For each $p\in\Ecal_{H\otimes V}$ there is a $K_{W,B}\in\Kcal_{m,n}$ (and vice versa) with 
$p(\cdot|\Xcal_h)= K_{W,B}(h,\cdot)\; \forall h\in\{0,1\}^m$. 
In particular, $\dim(\Kcal_{m,n})=(m+1)n$, as expected from counting parameters. 
\end{remark}

\begin{proposition}\label{symmetricZonosetkernel}
Assume that all rows of $W\in\R^{m\times n}$ are  multiples of the same vector $C\in\R^n$, i.e.,  $W=(\alpha_k C)_{k=1}^m$. 
For almost every $C$ and $(\alpha_1,\ldots,\alpha_m)\in\R^m$ the kernel $K_{W,B}$  is totally non-vanishing, i.e., all its minors are non-vanishing. 
\end{proposition}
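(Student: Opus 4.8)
The plan is to reduce the claim to the classical fact that the exponential kernel $e^{xy}$ is strictly totally positive, after stripping off the row and column normalizations, which cannot affect whether a minor vanishes.

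First I would exploit the rank-one structure of $W$. Since the $k$th row of $W$ equals $\alpha_k C$, for every $h\in\{0,1\}^m$ one has $hW=t_h C$ with $t_h:=\sum_{k=1}^m \alpha_k h_k$, so the natural parameter of the $h$th row is $t_h C + B$. Writing $c_v:=\sum_{j=1}^n C_j v_j$ and $b_v:=\sum_{j=1}^n B_j v_j$, the kernel entries become
\begin{equation*}
K_{W,B}(h,v)=\frac{1}{Z_h}\exp(t_h c_v + b_v)=\frac{e^{b_v}}{Z_h}\,\exp(t_h c_v).
\end{equation*}
Thus $K_{W,B}=D_h\, M\, D_v$, where $D_h=\operatorname{diag}(1/Z_h)_{h}$ and $D_v=\operatorname{diag}(e^{b_v})_{v}$ are diagonal matrices with strictly positive entries, and $M$ is the matrix with entries $M(h,v)=\exp(t_h c_v)$. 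Every $r\times r$ minor of $K_{W,B}$ equals the corresponding minor of $M$ times a product of entries of $D_h$ and $D_v$, all of which are positive. Hence $K_{W,B}$ is totally non-vanishing if and only if $M$ is, and in particular $B$ plays no role in the conclusion.

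Next I would verify the two genericity conditions needed for $M$: that the $2^m$ numbers $t_h$ are pairwise distinct and that the $2^n$ numbers $c_v$ are pairwise distinct. For fixed $h\neq h'$ the equation $t_h=t_{h'}$ reads $\sum_k \alpha_k (h_k-h'_k)=0$, a hyperplane in $\alpha\in\R^m$ (since $h-h'\neq0$); the union over the finitely many pairs is a Lebesgue-null set, so for almost every $\alpha$ all $t_h$ are distinct. The same argument in $C\in\R^n$ shows that for almost every $C$ all $c_v$ are distinct. The intersection of these two full-measure conditions is again of full measure.

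Finally I would invoke strict total positivity of the exponential kernel: for any reals $x_1<\cdots<x_r$ and $y_1<\cdots<y_r$ one has $\det[\exp(x_i y_j)]_{i,j=1}^r>0$. Granting this, fix any $r$ rows and $r$ columns of $M$; since the $t_h$ are distinct and the $c_v$ are distinct we may sort the chosen rows by increasing $t$-value and the chosen columns by increasing $c$-value, which changes the minor only by a sign, and the resulting determinant is exactly of the above form, hence nonzero. Therefore every minor of $M$, and so of $K_{W,B}$, is nonzero. The main obstacle is the total positivity statement itself; I expect to supply it by recalling that the functions $x\mapsto e^{y_1 x},\ldots,e^{y_r x}$ with $y_1<\cdots<y_r$ form a Chebyshev system --- a nonzero linear combination of $r$ distinct exponentials has at most $r-1$ real zeros, by induction via Rolle's theorem --- whence the collocation determinant $\det[e^{x_i y_j}]$ never vanishes for distinct $x_i$, and a continuity and ordering argument fixes its sign to be positive.
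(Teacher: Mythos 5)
Your proposal is correct and follows essentially the same route as the paper's proof: strip off the row normalizations and the column factor coming from $B$ as positive diagonal scalings, reduce to the matrix $\bigl(\exp(t_h c_v)\bigr)_{h,v}$ (which is the paper's generalized Vandermonde matrix $\bigl(t_i^{\lambda_j}\bigr)$ under the substitution $t_i=\exp(c_v)$), and invoke its strict total positivity after noting that the exponents are pairwise distinct for almost every $C$ and $\alpha$. Your write-up is somewhat more explicit than the paper's on two points it leaves implicit --- that diagonal scaling preserves non-vanishing of \emph{every} minor, not just the rank, and why the genericity conditions hold --- but the underlying argument is the same.
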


\begin{proposition}\label{claim2}
For any $n$ and $m$ the kernels $K_{W,B}\in\Kcal_{m,n}$ are full rank for almost all choices of $W$ and $B$. 
In particular, almost every zonoset kernel $K_{W,B}\in\Kcal_{m,n}$ is injective when $m\leq n$, and $\dim(\Delta_{2^m-1}\cdot K_{W,B})=\min\{2^m-1, 2^n-1\}$. 
\end{proposition}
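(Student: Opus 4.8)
The plan is to reduce the statement to the existence of a \emph{single} full-rank member of $\Kcal_{m,n}$, exploiting that rank-deficiency is an analytic condition. Each entry $K_{W,B}(h,v)=\exp((hW+B)v)/Z_h$ is a real-analytic function of $(W,B)\in\R^{m\times n}\times\R^n\cong\R^{(m+1)n}$, being a ratio of everywhere-positive analytic functions. Hence every $r\times r$ minor of $K_{W,B}$, with $r:=\min\{2^m,2^n\}$, is real-analytic in $(W,B)$. A matrix drops rank exactly when all its maximal $r\times r$ minors vanish, so the rank-deficient locus is the common zero set of these minors and is in particular contained in the zero set of any one of them. Since $\R^{(m+1)n}$ is connected, a real-analytic function on it is either identically zero or has a zero set of Lebesgue measure zero (a standard fact for real-analytic functions). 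It therefore suffices to exhibit one parameter value at which some maximal minor is nonzero, i.e.\ one full-rank $K_{W,B}$.

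To certify such a point I would invoke Proposition~\ref{symmetricZonosetkernel}. Taking $W$ with all rows multiples of a common vector $C$, i.e.\ $W=(\alpha_k C)_{k=1}^m$, that proposition supplies, for almost every $C$ and $(\alpha_1,\ldots,\alpha_m)$, a kernel $K_{W,B}$ that is \emph{totally non-vanishing}: all its minors, of every size, are nonzero. In particular its maximal $r\times r$ minors are nonzero, so this $K_{W,B}$ has full rank and witnesses the nonvanishing required above. Combined with the first paragraph, $K_{W,B}$ then has rank $r$ for almost all $(W,B)$. A self-contained alternative, avoiding total positivity, is to clear denominators and work with $\tilde K(h,v)=\exp((hW+B)v)$: its rows are the decomposable tensors $\bigotimes_{i=1}^n(1,e^{(hW+B)_i})$, its minors are Laurent polynomials in the positive coordinates $e^{W_{ki}},e^{B_i}$, and $\operatorname{rank}(\tilde K)=\operatorname{rank}(K_{W,B})$, so it is enough to find one positive point where a maximal minor of $\tilde K$ is nonzero.

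The main obstacle is exactly this certification step. Linear independence of the $2^m$ rank-one (decomposable) tensors forming the rows is not automatic, and is where the genuine combinatorial / total-positivity content sits; I offload it to Proposition~\ref{symmetricZonosetkernel}. The only points needing care are that the minors guaranteed there include the maximal ones and that the rank-one choice $W=(\alpha_k C)_k$ is a genuine finite interior point of the parameter space, so that analyticity transports its nonvanishing minor to almost all $(W,B)$.

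Finally I would read off the two consequences from full rank. If $m\le n$ then $r=2^m$, so the $2^m$ rows of $K_{W,B}$ are linearly independent in $\R^{2^n}$; hence the map $q\mapsto q\cdot K_{W,B}$ is injective on $\R^{2^m}$, a fortiori on $\Delta_{2^m-1}$. For the dimension, every row is a probability vector, so all rows lie in the affine hyperplane $\{x\colon\sum_v x_v=1\}$, which misses the origin; consequently the affine hull of the rows has dimension $\operatorname{rank}(K_{W,B})-1$. This equals $2^m-1$ when $m\le n$ and $2^n-1$ when $m>n$, so in both cases $\dim(\Delta_{2^m-1}\cdot K_{W,B})=\min\{2^m-1,2^n-1\}$, as claimed.
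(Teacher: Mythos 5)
Your proof is correct, but it reaches the conclusion by a genuinely different route than the paper. The paper's argument has two parts: it first exhibits an open set of full-rank parameters by scaling $W=I_n$, $B=\tfrac12\mathds{1}$ so that the rows of $K_{\alpha W,\alpha B}$ converge to $2^m$ distinct point measures, and then upgrades ``a nonempty open set'' to ``almost all'' by algebraic geometry: $\Kcal_{m,n}$ is an irreducible toric variety (Proposition~\ref{toric}), the rank-deficient matrices form a determinantal hypersurface, and an irreducible variety not contained in a hypersurface meets it in codimension one. You replace both ingredients. For the ``almost all'' step you use that each entry of $K_{W,B}$ is real-analytic in $(W,B)$ (the normalizers $Z_h$ are positive everywhere), so a single maximal minor that is not identically zero has a Lebesgue-null zero set on the connected parameter space $\R^{(m+1)n}$; this is more elementary, avoids irreducibility and the passage through complex projective space, and directly yields measure zero in the \emph{parameter} space rather than a dimension bound on the image variety. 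For the existence certificate you invoke Proposition~\ref{symmetricZonosetkernel}: the rank-one choice $W=(\alpha_kC)_k$ gives, via the totally positive generalized Vandermonde matrix, a kernel all of whose minors are nonzero, in particular a nonvanishing maximal $r\times r$ minor with $r=\min\{2^m,2^n\}$; since that proposition is proved independently, there is no circularity, and unlike the paper's point-measure limit (which naturally lives in the closure and in the square case) your witness is an interior point for arbitrary $m,n$. Your derivation of the two consequences---injectivity from linear independence of the rows, and $\dim=\operatorname{rank}-1$ because the rows lie in the affine hyperplane $\{\sum_vx_v=1\}$ missing the origin---is the same bookkeeping the paper leaves implicit, and is correct.
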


\begin{example}\label{RBMvsDRBM}
Consider an RBM and a directed RBM, both with $m$ hidden and $n$ visible binary units, $m\leq n$. 
For almost all fixed choices of  $W$ and $B$, 
the sets of probability distributions $\{\sum_hp_{hW+B}(v)\, \frac{Z_h}{Z} p_C(h)\colon C\in\R^m\}$ and $\{\sum_h p_{hW+B}(v)\, p_C(h) \colon C \in\R^m\}$ represented respectively by the two models as the bias of the hidden units vary, are almost everywhere different from each other (their intersection has dimension strictly less than $m$). 
When training DBNs, the DBN modules (directed RBMs) are commonly treated as RBMs. 
By this example, the probability distributions that can possibly be represented by the DBN modules almost never match the trained RBM distributions. 
\end{example}

The binary vectors $\{0,1\}^n$ are the vertices of the $n$-dimensional unit hypercube. 
We call {\em edge} a pair $\{x,y\}\subset\{0,1\}^n$ with $d_H(x,y)=1$,  where $d_H(x,y):=|\{i\in[n]\colon x_i\neq y_i\}|$ denotes  the Hamming distance between $x$ and $y$. 

\begin{proposition}\label{propo4}
Each of the following tuples of product distributions can be realized as a subset of  rows of a zonoset kernel with an appropriate choice of $W$ and $B$:
\vspace{-.2cm}
\begin{enumerate}[leftmargin=1.3em]\setlength{\itemindent}{0em}
\item \label{uno} 
If $\C\subset\{0,1\}^m$, $|\C|=m+1$ are affinely independent vectors (over $\R^m$), e.g., $\C$ is a Hamming ball of radius $1$ in $\{0,1\}^m$, then 
$\{p_{hW+B}\}_{h\in \C}$ are any $m+1$ product distributions. 

\item \label{dos} 
Let $\C$ be a $K$-dimensional face of the $m$-cube, $K\leq m$. 
The set $\{p_{hW+B}\}_{h\in \C}$ contains the uniform distributions on the (nonempty) intersections of any $K$ faces of the $n$-cube. 
\item \label{tres} Let $\lambda\subseteq[n]:=\{1,\ldots,n\}$ and $\Lambda\subseteq[m]$ with $|\lambda|=|\Lambda|=K$. Let $\C$ be a $K$-face of the $m$-cube with free coordinates $\Lambda$. 
$p_{hW+B}$ is the uniform distribution  on $\{x \colon x_\lambda = h_\Lambda \}$ for all $h\in\C$. 
Note that $\{x \colon x_\lambda = h_\Lambda \}_{h\in \C}$ is  a partition of $\{0,1\}^n$ into blocks of cardinality $2^{n-K}$. 
\item \label{cuatro} 
Let $m= n$ and let $\{h^{i+},h^{i-}\}$, ${i=1,\ldots,m}$ be $m$ disjoint edges of the $m$-cube. 
$p_{h^{i+}W+B}$ is any distribution supported on the edge $\{h^{i+}, h^{i+} \oplus_2+ \be_i\}$,
and $p_{h^{i-}W+B}$ is any distribution supported on the edge $\{h^{i-}, h^{i-}\oplus \be_i \}$, for all $i\in[m]$. 
Moreover, $p_{hW+B} =\delta_h$ for all $h\not\in\cup_i\{h^{i+},h^{i-}\}$. 
(This statement in fact summarizes~\cite[Theorems~1 and~2]{LeRoux2010}). 
\end{enumerate}
\end{proposition}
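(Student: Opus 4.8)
The plan is to exploit that the $h$-th row of $K_{W,B}$ is the product distribution $p_{hW+B}$ with natural parameter $\theta(h):=hW+B$, which is an \emph{affine} function of $h\in\{0,1\}^m$ with values in $\R^n$. Since strictly positive product distributions on $\{0,1\}^n$ are in bijection with their natural parameters in $\R^n$, realizing a prescribed tuple $\{p_{hW+B}\}_{h\in\C}$ amounts to choosing $W\in\R^{m\times n}$ and $B\in\R^n$ so that the affine map $\theta$ sends each $h\in\C$ to the prescribed natural parameter. Face-supported (boundary) product distributions such as uniform distributions on faces, point masses, and edge distributions correspond to natural parameters with some coordinates at $\pm\infty$; these I would obtain as limits, placing the resulting kernels in $\overline{\Kcal_{m,n}}$ as in Proposition~\ref{markovkernels}.

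For part~\ref{uno}, affine independence of $\C$ (with $|\C|=m+1$) means an affine map $\R^m\to\R^n$ is completely free on $\C$: given any target natural parameters $\theta^{(1)},\dots,\theta^{(m+1)}\in\R^n$ there is a unique affine $\theta$ hitting them, which determines $W$ and $B$. This realizes any $m+1$ strictly positive product distributions in $\Kcal_{m,n}$, and any $m+1$ product distributions in $\overline{\Kcal_{m,n}}$ by passing to the limit. For parts~\ref{dos} and~\ref{tres} I would use a ``coordinate-locking'' construction: fixing the bijection $\lambda\leftrightarrow\Lambda$, set the block $W_{\Lambda,\lambda}$ to $t$ times a permutation (diagonal) and all entries $W_{\Lambda,j}$ with $j\notin\lambda$ to zero, and choose $B$ so that $\theta(h)_{j_r}=t(h_{i_r}-\tfrac12)$ for $j_r\in\lambda$ and $\theta(h)_j=0$ for $j\notin\lambda$; the contributions of the coordinates fixed on $\C$ are constant and absorbed into $B$. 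As $t\to\infty$ the output coordinates in $\lambda$ are driven to the values $h_\Lambda$ while those outside $\lambda$ stay uniform, so $p_{hW+B}$ tends to the uniform distribution on $\{x\colon x_\lambda=h_\Lambda\}$; this is exactly part~\ref{tres}, and part~\ref{dos} is its geometric reading (intersections of the $K$ facets $\{x_{j_r}=h_{i_r}\}$ partition the cube).

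Part~\ref{cuatro} is where I expect the real difficulty, and it is essentially a restatement of the sharing construction of Le~Roux and Bengio. With $m=n$, the natural starting point is the ``identity-like'' choice $W=t\,I_n$, $B=-\tfrac{t}{2}\mathbf 1$, for which $\theta(h)_j=t(h_j-\tfrac12)$ and every row tends to the point mass $\delta_h$. The task is then to perturb this so that, at each of the $2m$ special inputs $h^{i\pm}$, exactly coordinate $i$ of the output becomes finite and freely tunable (yielding any distribution on the output edge $\{h^{i\pm},h^{i\pm}\oplus_2\be_i\}$) while all remaining output coordinates stay sharp, and every non-special row remains a point mass. The obstruction is that each column $\theta(\cdot)_i$ is a single affine function of the input bits, so I cannot independently edit its value at one vertex: forcing $\theta(\cdot)_i$ finite at $h^{i+}$ and $h^{i-}$ necessarily constrains it elsewhere.

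The way around this, which I would follow, is to engineer for each $i$ an affine function $\ell_i(h)=\theta(\cdot)_i/t$ that vanishes on the input edge $\{h^{i+},h^{i-}\}$ but has sign $2h_i-1$ at every other vertex; then adding an $O(1)$ term lets the finite part control the edge distribution precisely where $\ell_i$ vanishes, while $t\ell_i$ dominates and keeps coordinate $i$ sharp everywhere else. Because $\{h^{i+},h^{i-}\}$ is an edge, $\ell_i$ must be independent of the flipped input coordinate, and the requirement that the $m$ chosen input edges be \emph{disjoint} is exactly what guarantees that these $m$ perturbations, one per output direction $i$, do not interfere: no input vertex is asked to free two output coordinates at once, so the remaining coordinates of every special row stay sharp. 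Verifying that such $\ell_i$ exist for disjoint edges and that the simultaneous limit produces the claimed kernel is the main technical step; once it is in place, the stated freedom in the edge distributions follows from the $O(1)$ terms, and I would cite \cite[Theorems~1 and~2]{LeRoux2010} for the equivalent probability-sharing formulation.
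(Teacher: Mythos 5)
Your treatment of items~\ref{uno}, \ref{tres} and~\ref{cuatro} follows essentially the paper's route: item~\ref{uno} via the fact that an affine map is unconstrained on $m+1$ affinely independent points; item~\ref{tres} via the block choice $W_{:,\lambda}=\alpha I_\lambda$, $W_{:,[n]\setminus\lambda}=0$; and item~\ref{cuatro} via one column of $W$ per output coordinate, engineered to stay finite exactly on the corresponding input edge and to diverge with sign $2h_i-1$ elsewhere. For the last item you defer the existence of the affine functions $\ell_i$ as ``the main technical step''; the paper exhibits them explicitly (coefficient $\pm\omega$ pulling toward $x_j$ on the coordinates $j\notin\{r,s\}$, zero --- up to a finite perturbation --- on the flipped coordinate $r$, a coefficient of magnitude $m\omega$ on $s=i$ to dominate the others, and a bias cancelling the value at $x$), and your observation that $\ell_i(x)=\ell_i(y)$ forces the coefficient of bit $r$ to be finite is precisely what makes the two edge distributions independently tunable. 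So that part is a correct plan whose missing verification is routine and matches the paper.

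The genuine gap is item~\ref{dos}. You obtain it as ``the geometric reading'' of item~\ref{tres}, but item~\ref{tres} is only the special case in which the $K$ faces are facets $\{x\colon x_{j_r}=c_r\}$ with pairwise distinct fixed coordinates, which is why their intersections partition the cube. Item~\ref{dos} asserts more: for \emph{any} $K$ faces $F_1,\dots,F_K$ of the $n$-cube --- cylinder sets of arbitrary dimension, with arbitrary and possibly overlapping sets of fixed coordinates --- the rows indexed by the $K$-face $\C$ contain the uniform distributions on all nonempty intersections $\cap_{i\in\lambda}F_i$, $\lambda\subseteq[K]$. Your permutation-block columns cannot produce this, because they tie each free hidden unit to a single visible coordinate. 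The paper instead assigns to the $i$th free hidden unit the row $W_i=\omega\,e_{F_i}$, where $e_{F_i}\in\{-1,0,+1\}^n$ is the signed indicator of the fixed coordinates of $F_i$ (so that $p_{\omega e_{F_i}}\to u_{F_i}$), and uses that for $h$ with support $\lambda$ the distribution $p_{hW+B}$ tends to the uniform distribution on $\operatorname{argmax}_{x}\sum_{i\in\lambda}e_{F_i}\cdot x$, which equals $\cap_{i\in\lambda}F_i$ whenever that intersection is nonempty, since each summand is maximized there simultaneously. Replacing your coordinate-locking columns by these rows closes the gap; the rest of your argument stands.
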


\begin{corollary}\label{corollarymix}
The model $\DBN(n,m,m)$ contains the mixture model $\Mcal_{n,m+1}$. 
In contrast,  $\RBM_{n,m}$ does not contain $\Mcal_{n,m+1}$, in general.  
\end{corollary}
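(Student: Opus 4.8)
The plan is to prove the two assertions separately. Throughout I use the composition identity $\DBN(n,m,m)=\DBN(m,m)\cdot\Kcal_{m,n}=\RBM_{m,m}\cdot\Kcal_{m,n}$, recorded before Section~\ref{section:geozono}, so that an element of $\DBN(n,m,m)$ is exactly a product $q\cdot K_{W,B}$ with mixing weights $q\in\RBM_{m,m}$ on $\{0,1\}^m$ and a zonoset kernel $K_{W,B}\in\Kcal_{m,n}$. The first inclusion will come from Proposition~\ref{propo4}, the contrast from a dimension argument.

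For $\Mcal_{n,m+1}\subseteq\DBN(n,m,m)$, fix a target $\mu=\sum_{k=0}^m\lambda_k\,p^{(k)}$ with $p^{(k)}\in\Mcal_n$ and weights $\lambda_k\ge0$, $\sum_k\lambda_k=1$. Choose the edge-path $\C=\{c_0,\ldots,c_m\}\subset\{0,1\}^m$ given by $c_k=\be_1+\cdots+\be_k$ (so $c_0=(0,\ldots,0)$, consecutive vertices are Hamming-adjacent, and the $c_k-c_0$ form a triangular, hence linearly independent, system, so $\C$ is affinely independent). By item~\ref{uno} of Proposition~\ref{propo4} there are parameters realizing a kernel whose rows on $\C$ are the prescribed components, $K_{W,B}(c_k,\cdot)=p^{(k)}$ for $k=0,\ldots,m$. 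It then remains to supply the correct mixing weights, i.e.\ to exhibit a $q$ representable by the top layer $\RBM_{m,m}$ with $\supp(q)\subseteq\C$ and $q(c_k)=\lambda_k$; for then $\mu=\sum_k q(c_k)\,K_{W,B}(c_k,\cdot)=q\cdot K_{W,B}\in\RBM_{m,m}\cdot\Kcal_{m,n}=\DBN(n,m,m)$, up to the usual passage to closures.

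The realizability of this $q$ by the top RBM is the crux and the step I expect to be the main obstacle. Here I would invoke probability sharing in the sense of Le~Roux and Bengio~\cite{LeRoux2010} (the mechanism that underlies item~\ref{cuatro} of Proposition~\ref{propo4}): start from the point mass $\delta_{c_0}\in\ol{\Mcal_m}$, which uses no hidden units, and add the $m$ hidden units one at a time, letting the $k$-th unit share probability along the edge $\{c_{k-1},c_k\}$ (a flip of coordinate $k$). An induction shows that after $k$ steps every distribution supported on $\{c_0,\ldots,c_k\}$ is accessible in $\ol{\RBM_{m,m}}$: a step transferring mass from $c_{k-1}$ to $c_k$ turns an arbitrary distribution on $\{c_0,\ldots,c_{k-1}\}$ into an arbitrary distribution on $\{c_0,\ldots,c_k\}$, as the solvability of $a_0=\lambda_0,\ \ldots,\ a_{k-1}-s=\lambda_{k-1},\ s=\lambda_k$ confirms. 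Since $\C$ has exactly $m+1$ vertices joined consecutively by single-coordinate flips, the $m$ hidden units of the \emph{second} hidden layer suffice, and this is precisely the extra mixing freedom that two layers of width $m$ buy over a bare $\RBM_{n,m}$. The only care needed is bookkeeping of the sharing limits, which produce $q$ on the closure of the RBM; the inclusion is therefore cleanest as $\Mcal_{n,m+1}\subseteq\ol{\DBN(n,m,m)}$, which is all that the Kullback--Leibler bounds of Section~\ref{section:mainresults} require.

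For the contrast I would argue by dimension together with irreducibility. Both $\Mcal_{n,m+1}$ and $\RBM_{n,m}$ are images of polynomial or rational parametrizations, hence irreducible, and both carry the parameter count $(m+1)(n+1)-1$; in the regime $(m+1)(n+1)<2^n$ each attains the expected dimension $\min\{2^n-1,(m+1)(n+1)-1\}=(m+1)(n+1)-1$, so the two models have equal dimension. An inclusion of one irreducible variety into another of the same dimension forces equality of their closures, so it suffices to observe that $\RBM_{n,m}$ and $\Mcal_{n,m+1}$ are distinct models for $m\ge2$; this is exactly the comparison of these families carried out in~\cite{MontufarMorton2012}, which exhibits distributions separating them. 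Hence no inclusion can hold and in particular $\Mcal_{n,m+1}\not\subseteq\RBM_{n,m}$. The qualifier ``in general'' is genuine: for $m=1$ one has $\RBM_{n,1}=\Mcal_{n,2}$ (an RBM with one hidden unit is precisely a mixture of two products, with the mixture weight swept out by the hidden bias), so the non-containment only appears for $m\ge2$ and $n$ large enough that neither model is universal.
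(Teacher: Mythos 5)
Your proof is correct and takes essentially the same route as the paper: decompose $\DBN(n,m,m)=\RBM_{m,m}\cdot\Kcal_{m,n}$, use item~\ref{uno} of Proposition~\ref{propo4} to place arbitrary product distributions on an affinely independent $(m+1)$-point set $\C\subset\{0,1\}^m$, and let the top $\RBM_{m,m}$ supply arbitrary mixing weights supported on $\C$ --- the paper takes $\C$ to be a radius-one Hamming ball and simply cites~\cite{Montufar2011} for the fact that $\RBM_{m,m}$ contains every distribution with support of size $m+1$, whereas you take a path and re-derive that fact by an explicit probability-sharing induction, which is a valid (if redundant) special case of the same cited result. For the non-containment, both you and the paper ultimately rest on~\cite{MontufarMorton2012}; your dimension-plus-irreducibility framing is sound but still reduces to that citation.
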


\subsection{Patterns of modes in zonoset tuples}\label{subsection:modes}

In the following we elaborate on the sets of modes that can be realized jointly by rows of zonoset kernels, slightly extending results on RBMs and mixtures of products shown in~\cite{MontufarMorton2012}.

A {\em mode} of a probability distribution $p\in\Delta_{2^n-1}$ is  point $x\in\{0,1\}^n$ such that $p(x)>p(y)$ for all $y$ with $d_{H}(x,y)=1$. 
The set of {\em strong modes}~\cite{MontufarMorton2012} of  $p$  is $\{x\in\{0,1\}^n\colon p(x)\geq \sum_{d_{H}(x,y)=1} p(y)\}$. 
We denote by $\mathcal{H}_\C\subseteq\Delta_{2^n-1}$ the set of probability distributions with strong modes $\C$. 
An   {\em$n$-bit code} $\Ccal$ is just a subset of $\{0,1\}^n$. The {\em minimum distance} of $\Ccal$ is defined as $\min\{d_H(x,y)\colon x,y\in\Ccal, x\neq y\}$. 
Given a sign vector $s\in\{-,+\}^n$, the {\em $s$-orthant} of $\R^n$ is the set of all vectors in $\R^n$ with sign $s$. 
We identify sign vectors $\{-,+\}^n$ and binary vectors $\{0,1\}^n$ via $-\mapsto0$ and $+\mapsto1$.

\begin{proposition}\label{corollary1}\mbox{}
\vspace{-.2cm}
\begin{enumerate}[leftmargin=1.3em]\setlength{\itemindent}{0em}
\item\label{corollary1_1} 
Let $\C\subset\{0,1\}^n$ be a code of minimum distance two. 
If the model $ \ZMP_{n,m}$ contains a probability distribution with strong modes $\C$, then there is an $(m,n)$-zonoset with a point in every  $s$-orthant of $\R^n$, $s\in\Ccal$. 
\item\label{corollary1_3}  
If $\ZMP(n_0, n_1)$ contains probability distributions with $2^{n_0-1}$ strong modes, then $n_1\geq n_0-1$. 
In fact $n_1\geq n_0$, when $n_0$ is odd and larger than one. 
\item\label{corollary1_4}  
If $\ZMP(n_0, n_1)$ is a universal approximator of distributions from $\Delta_{2^{n_0}-1}$ with $n_0\geq7$, then $n_1\geq n_0$. 
\end{enumerate}
\end{proposition}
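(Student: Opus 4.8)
The plan is to prove the three claims of Proposition~\ref{corollary1} essentially as a chain of implications anchored in part~\eqref{corollary1_1}, which serves as the main combinatorial engine. First I would establish~\eqref{corollary1_1}. Suppose $\ZMP_{n,m}$ contains $p$ with strong modes $\C$, a code of minimum distance two. The key idea is to use the defining mixture structure $p=\sum_{h}\lambda_h\, p_{hW+B}$ together with the product (factorizing) nature of each component $p_{hW+B}$. Because each component is a product distribution with natural parameter $hW+B\in\R^n$, its mode sits in the $s$-orthant determined by the signs of the coordinates of $hW+B$ (a positive $i$th coordinate pulls mass toward $v_i=1$, i.e.\ sign $+$). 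The strong-mode condition at a point $x\in\C$, namely $p(x)\geq\sum_{d_H(x,y)=1}p(y)$, is a strong concentration requirement; I would argue that a mixture of product distributions cannot concentrate at a minimum-distance-two codeword $x$ unless some component is itself ``aimed'' at $x$, i.e.\ has its natural parameter in the orthant $s$ corresponding to $x$. This is where I would lean on the analogous argument for RBMs and mixtures of products in~\cite{MontufarMorton2012}: the minimum-distance-two hypothesis guarantees the strong modes are Hamming-separated enough that distinct codewords cannot be served by the same product component, forcing a distinct zonoset point $hW+B$ into each $s$-orthant for $s\in\C$.

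Next I would derive~\eqref{corollary1_3} from~\eqref{corollary1_1} by a counting/dimension argument on zonosets. If $\ZMP(n_0,n_1)$ realizes $2^{n_0-1}$ strong modes, I would choose $\C$ to be a minimum-distance-two code of that size (for instance an even-weight or a parity code in $\{0,1\}^{n_0}$, which has exactly $2^{n_0-1}$ elements and minimum distance two). By~\eqref{corollary1_1} the associated $(n_1,n_0)$-zonoset $\Zcal=\{hW+B\colon h\in\{0,1\}^{n_1}\}$ must place a point in each of the $2^{n_0-1}$ orthants indexed by $\C$. The combinatorial constraint is that a zonoset of $2^{n_1}$ points, being an affine image $h\mapsto hW+B$ of the cube vertices, can meet only a limited number of orthants: the hyperplane arrangement given by the $n_0$ coordinate hyperplanes in $\R^{n_0}$ cuts the affine span of $\Zcal$ into a number of regions bounded by the size/dimension of the zonoset. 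I expect the clean inequality to come from the fact that the number of distinct sign patterns (orthants) achievable by $2^{n_1}$ points arising as a zonoset is at most $2^{n_1}$ (one per point), giving $2^{n_1}\geq 2^{n_0-1}$, hence $n_1\geq n_0-1$. For the sharper statement when $n_0$ is odd and larger than one, I would use a parity obstruction: the cube vertices $h\in\{0,1\}^{n_1}$ map affinely into $\R^{n_0}$, and the orthants hit by such an affine image of the cube must satisfy a consistency (antipodality or linear-dependency) constraint that a minimum-distance-two code of odd-weight type violates unless $n_1\geq n_0$. Concretely, the odd case rules out the exact equality $2^{n_1}=2^{n_0-1}$ because the realizable orthant multiset from a zonoset cannot coincide with an odd-parity code class, forcing one extra hidden unit.

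Finally,~\eqref{corollary1_4} should follow from~\eqref{corollary1_3} as a corollary. A universal approximator on $\Delta_{2^{n_0}-1}$ can, in particular, approximate distributions having $2^{n_0-1}$ strong modes arbitrarily well; since the strong-mode condition and the orthant-occupancy conclusion of~\eqref{corollary1_1} are closed (or at least stable under the limits defining $\overline{\Kcal_{n_1,n_0}}$), universality forces $\ZMP(n_0,n_1)$ to realize such mode patterns, and then~\eqref{corollary1_3} gives $n_1\geq n_0-1$. To upgrade $n_0-1$ to $n_0$ under the hypothesis $n_0\geq 7$, I would exhibit a specific family of target distributions (beyond a single parity code) whose joint strong-mode requirements, via repeated application of the orthant argument, cannot all be met with only $n_0-1$ hidden units; the threshold $n_0\geq 7$ presumably marks where the relevant coding-theoretic count (e.g.\ the number of minimum-distance-two codes or the Plotkin/Singleton-type bound on simultaneously realizable orthant systems) becomes strict.

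\textbf{Main obstacle.} The crux is part~\eqref{corollary1_1}: rigorously showing that a \emph{distributed} mixture of product distributions, whose components share the single parameter pair $(W,B)$, cannot produce a strong mode at a minimum-distance-two codeword $x$ without committing a zonoset point to $x$'s orthant. The difficulty is that the strong mode at $x$ could in principle be built cooperatively from several components whose individual modes lie elsewhere, so I must quantify how much mass a product distribution centered in a wrong orthant can deposit at $x$ relative to $x$'s Hamming neighbors, and sum these contributions over the mixture. Controlling this requires the geometry of product distributions on the cube together with the minimum-distance-two separation; I anticipate that adapting the strong-mode estimates of~\cite{MontufarMorton2012} to the shared-parameter zonoset setting, rather than the constant doubly exponential argument, will be the technically delicate step, and the odd-$n_0$ refinement in~\eqref{corollary1_3} and the sharp $n_0\geq 7$ threshold in~\eqref{corollary1_4} will likewise demand a careful parity/coding-bound analysis rather than pure counting.
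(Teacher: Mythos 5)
Your treatment of item~\ref{corollary1_1} and of the first inequality in item~\ref{corollary1_3} is essentially on target: the paper imports item~\ref{corollary1_1} wholesale from \cite{MontufarMorton2012} (Theorems~3 and~11 there), and the bound $n_1\geq n_0-1$ follows, exactly as you say, from the fact that a mixture of $k$ product distributions has at most $k$ strong modes, so $2^{n_1}\geq 2^{n_0-1}$. The two refinements are where your proposal has genuine gaps. For the odd-$n_0$ case of item~\ref{corollary1_3}, your ``parity obstruction'' is a hope rather than an argument: you never state what the antipodality or linear-dependence constraint on the orthants met by an affine image of a cube actually is, nor why an odd-parity code violates it only when $n_1<n_0$. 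The paper does not derive this from item~\ref{corollary1_1} at all; it invokes a specific quantitative result (\cite[Proposition~14]{MontufarMorton2012}) stating that any mixture of products whose natural parameters form a zonoset and which approximates the uniform distribution on the parity code $Z_{\pm,n}$ arbitrarily well requires a zonoset generated by at least $n$ vectors when $n$ is odd and larger than one. Without that (or an equivalent) input your sketch does not close.

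The larger mismatch is item~\ref{corollary1_4}. You try to get $n_1\geq n_0$ by exhibiting families of strong-mode patterns and speculating that a Plotkin/Singleton-type coding bound becomes strict at $n_0\geq 7$. That is not where the threshold comes from. The paper's argument is plain dimension counting: $\ZMP(n_0,n_1)$ with $n_1\leq n_0-1$ has at most $(n_1+1)n_0+2^{n_1}-1\leq n_0^2+2^{n_0-1}-1$ parameters, and this is strictly less than $\dim(\Delta_{2^{n_0}-1})=2^{n_0}-1$ precisely once $n_0^2<2^{n_0-1}$, i.e.\ for $n_0\geq7$; a model of deficient dimension cannot be a universal approximator. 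Your route would additionally need the closure argument you only allude to (that universal approximation forces realization of the strong-mode patterns in the limit), which the paper handles separately by noting that approximating a distribution supported on $Z_{\pm,n}$ forces the mixture to contain the $2^{n-1}$ point measures $\delta_v$, $v\in Z_{\pm,n}$. In short: item~\ref{corollary1_1} and the first half of item~\ref{corollary1_3} are fine, but the odd-$n_0$ refinement and the $n_0\geq7$ threshold both require concrete inputs --- a zonoset-specific lower bound from the cited reference and a dimension count, respectively --- that your proposal does not supply.
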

In particular, when  $n_0\geq3$, the DBNs with layers of widths $n_0>n_1>\cdots>n_l$ cannot represent distributions with $2^{n_0-1}$ strong modes.  
If $\DBN(n_0, n_1,\ldots, n_l)$ is a universal approximator with $n_1=n_0-1$ (and $n_0\leq 6$), then $\DBN(n_1, n_2,\ldots, n_l)$ is also a universal approximator. 

A {\em linear threshold code} (LTC) is a subset of $\{0,1\}^n$ that corresponds to the sign vectors of the points of a zonoset in $\RR^n$. 
Equivalently, an LTC is an admissible multi-labeling of the vertices of a hypercube by a collection of linear threshold functions.

\begin{proposition}\label{theorem3}
Let $\C\subseteq\{0,1\}^n, |\C|=2^{m}$ be a code of minimum distance two. 
Then  both $u\cdot\Kcal_{n,m}$ and $\Delta_{2^m-1}\cdot\Kcal_{n,m}$ contain a distribution with strong modes $\C$ iff $\C$ is a linear threshold code. 
\end{proposition}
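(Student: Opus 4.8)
The plan is to reduce the biconditional to two implications and to exploit the containment $u\cdot\Kcal_{m,n}\subseteq\Delta_{2^m-1}\cdot\Kcal_{m,n}=\ZMP_{n,m}$, where $u$ is the uniform distribution on $\{0,1\}^m$. Because the uniform mean is the most constrained mixture and $\ZMP_{n,m}$ the least, it suffices to prove two one-sided statements: (I) if $\C$ is an LTC then already the uniform mean $u\cdot K_{W,B}$ has strong modes exactly $\C$ for a suitable $K_{W,B}$; and (II) if $\ZMP_{n,m}$ contains any distribution with strong modes $\C$ then $\C$ is an LTC. Via the containment, (I) yields both ``if'' directions and (II) yields both ``only if'' directions. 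Throughout I use that, since $|\C|=2^m$, ``$\C$ is an LTC'' means precisely that some $(m,n)$-zonoset $\{hW+B\}_h\subset\R^n$ has its $2^m$ points lying in the $2^m$ distinct open orthants indexed by the codewords of $\C$.

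For (II) I would invoke Proposition~\ref{corollary1}\eqref{corollary1_1} directly. If $\ZMP_{n,m}$ contains a distribution with strong modes $\C$ (a code of minimum distance two), then the realizing zonoset $\{hW+B\}_{h\in\{0,1\}^m}$ has a point in the $s$-orthant of $\R^n$ for every $s\in\C$. Since there are exactly $2^m$ zonoset points and $|\C|=2^m$ target orthants, this forces a bijection: each codeword's orthant receives exactly one point and none lands elsewhere. Hence the sign vectors of the zonoset are exactly $\C$, i.e. $\C$ is an LTC.

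For (I) the idea is a concentration argument. Write $\sigma(\theta)\in\{0,1\}^n$ for the mode of the product distribution $p_\theta$, i.e. $\sigma(\theta)_i=\mathbbm{1}[\theta_i>0]$, well-defined when $\theta$ has no vanishing coordinate. Because $\C$ is an LTC, I would choose $(W_0,B_0)$ so that the zonoset $\{\theta_h:=hW_0+B_0\}_h$ has all coordinates nonzero and sign vectors $\sigma(\theta_h)$ running bijectively over $\C$. Setting $W=tW_0$, $B=tB_0$ and letting $t\to\infty$, a direct computation gives $p_{t\theta}(x)=p_{t\theta}(\sigma(\theta))\,e^{-t\,D_\theta(x)}$ with $D_\theta(x)=\sum_{i:\,x_i\neq\sigma(\theta)_i}|\theta_i|\geq 0$, vanishing only at $x=\sigma(\theta)$, and $p_{t\theta}(\sigma(\theta))\to 1$. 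Thus the uniform mean $\bar p=\tfrac{1}{2^m}\sum_h p_{t\theta_h}$ converges to the uniform distribution on $\C$, whose strong modes are exactly $\C$ (minimum distance two guarantees that no neighbor of a codeword is a codeword, so the defining inequalities hold strictly).

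The step I expect to be the main obstacle is upgrading this limit to an \emph{exact} statement for large but finite $t$ — in particular ruling out \emph{spurious} strong modes at non-codewords. I would control it through the rate function $r(x):=\min_h D_{\theta_h}(x)$, which satisfies $r(x)=0$ iff $x\in\C$. For $c\in\C$ one has $\bar p(c)\to 2^{-m}$ while every neighbor $y$ has $r(y)>0$ by minimum distance two, so $\sum_{y:\,d_H(c,y)=1}\bar p(y)\to 0$ and $c$ is a strong mode. For $x\notin\C$, pick $h^\ast$ attaining $r(x)$ and a coordinate $i$ with $x_i\neq\sigma(\theta_{h^\ast})_i$; then $y=x\oplus_2\be_i$ satisfies $D_{\theta_{h^\ast}}(y)=r(x)-|\theta_{h^\ast,i}|$, whence $\bar p(y)/\bar p(x)\to\infty$ and $x$ fails the strong-mode inequality. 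Since only finitely many points are involved, a single threshold $t$ makes all these inequalities hold at once, yielding strong modes exactly $\C$. This refines the strong-mode constructions of \cite{MontufarMorton2012} to the uniform-weight (arithmetic mean) setting.
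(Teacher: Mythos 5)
Your proof is correct, but it takes a genuinely different route from the paper, whose entire proof of this proposition is the single sentence ``This is a direct consequence of the analysis from~\cite{MontufarMorton2012}'' --- i.e.\ the paper supplies no argument, and you reconstruct one. Your ``only if'' direction still leans on item~\ref{corollary1_1} of Proposition~\ref{corollary1} (itself imported from that reference), but the pigeonhole step --- $2^m$ zonoset points covering $2^m$ pairwise disjoint orthants forces $h\mapsto\operatorname{sign}(hW+B)$ to be a bijection onto $\C$ --- is yours and is sound. Your ``if'' direction is self-contained and correct: the tempering $W=tW_0$, $B=tB_0$, the identity $p_{t\theta}(x)=p_{t\theta}(\sigma(\theta))e^{-tD_\theta(x)}$, and in particular the finite-$t$ analysis via $r(x)=\min_h D_{\theta_h}(x)$ are exactly what is needed. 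Indeed, stopping at the limit distribution $u_{\C}$ would not suffice: under the paper's ``$\geq$'' definition of strong modes, any $x\notin\C$ with no codeword neighbor would be a spurious strong mode of $u_\C$, and your single-neighbor domination $\bar p(y)/\bar p(x)\to\infty$ is precisely what excludes this at large finite $t$. The one point worth flagging is definitional: the paper's notion of a linear threshold code does not fix the dimension of the generating hypercube, and you read ``$\C$ is an LTC'' as ``$\C$ is the sign-vector set of an $(m,n)$-zonoset''; this is the only reading under which the statement is a clean biconditional for the fixed family $\Kcal_{m,n}$, and it is the one your counting argument forces in the converse direction, so the choice is justified --- but it is an interpretive step, not something the paper states. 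Net effect: your proof buys an explicit, checkable argument (and clarifies why uniform mixing weights already suffice) where the paper offers only a citation.
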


\begin{proposition}\label{notinclpropodos}\mbox{}
\vspace{-.2cm}
\begin{itemize}[leftmargin=1.3em]\setlength{\itemindent}{0em}
\item 
If $4\lceil m/3 \rceil \leq n$, then $u\cdot\Kcal_{n,m}\cap\,\mathcal{H}_{n,2^m}\neq\emptyset$ and 
$\Mcal_{n,k}\supseteq u\cdot\Kcal_{n,m} \text{ iff }   k\geq2^m$.
\item  
If $4\lceil m/3 \rceil  >  n$, then  $u\cdot\Kcal_{n,m}\cap\,\mathcal{H}_{n,L}\neq\emptyset$, where $L:=\min\{2^{l} + m-l , 2^{n-1}\}$, $l:=\max\{l\in\NN \colon 4\lceil l/3 \rceil \leq n\}$,  and 
$\Mcal_{n,k}\supseteq u\cdot\Kcal_{n,m} \text{ only if  }  k\geq L.
$
\end{itemize}
\end{proposition}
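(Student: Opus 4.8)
The plan is to route both bullets through a single quantity: the largest number of strong modes a distribution in $u\cdot\Kcal_{n,m}$ can have. Two standard facts do the rest. (i) For a strictly positive $p\in\Delta_{2^n-1}$ any two distinct strong modes lie at Hamming distance at least two (if $x,x'$ were adjacent strong modes then $p(x)\ge\sum_{y\colon d_H(x,y)=1}p(y)>p(x')$ and symmetrically, a contradiction), so the strong modes form a minimum-distance-two code and number at most $2^{n-1}$. (ii) A mixture of $k$ product distributions has at most $k$ strong modes; this is the counting result of \cite{MontufarMorton2012}. Fact (ii) gives the ``only if'' in both bullets: if $\Mcal_{n,k}\supseteq u\cdot\Kcal_{n,m}$ and $u\cdot\Kcal_{n,m}$ contains a distribution with $s$ strong modes, then $k\ge s$. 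Together with the trivial inclusion $u\cdot\Kcal_{n,m}\subseteq\Mcal_{n,2^m}$ (each element is a uniform mixture of $2^m$ products), this already yields the full equivalence $\Mcal_{n,k}\supseteq u\cdot\Kcal_{n,m}\iff k\ge 2^m$ in the first bullet. Everything therefore reduces to the two \emph{achievability} claims $u\cdot\Kcal_{n,m}\cap\mathcal{H}_{n,2^m}\neq\emptyset$ and $u\cdot\Kcal_{n,m}\cap\mathcal{H}_{n,L}\neq\emptyset$.

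\textbf{First bullet: a size-$2^m$ LTC.} Under $4\lceil m/3\rceil\le n$ I would exhibit a linear threshold code of minimum distance two and cardinality exactly $2^m$ in $\{0,1\}^n$; Proposition~\ref{theorem3} then produces the desired distribution with $2^m$ strong modes. The code is built blockwise. Three hidden coordinates and four visible coordinates realize the $8$-element even-weight code on $\{0,1\}^4$ as an LTC: one picks three threshold functions on $\{0,1\}^3$ that are jointly bijective and whose $\bmod\,2$ sum is itself a threshold function (majority-type thresholds work, whereas coordinate dictators force the fourth coordinate to be a non-separable parity), so the four coordinates are all linearly separable and the sign patterns of the corresponding $3$-generator zonoset sweep out the even-weight code. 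Taking the direct product of $\lfloor m/3\rfloor$ such blocks, together with one smaller block of the leftover $r\in\{0,1,2\}$ hidden coordinates on four visible coordinates when $r>0$, and fixing the sign of each unused visible coordinate, gives a block-diagonal zonoset. Its sign patterns form a product code: an LTC of minimum distance two with $2^{3\lfloor m/3\rfloor+r}=2^m$ codewords using $4\lceil m/3\rceil\le n$ visible coordinates. Since each block-code lies in the even-weight code of its block, the whole code lies in the even-weight code of $\{0,1\}^n$; I reuse this containment below.

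\textbf{Second bullet: pad efficiently, then add modes one at a time.} With $l:=\max\{l\colon 4\lceil l/3\rceil\le n\}$ hidden units the construction above already yields $2^l$ strong modes at an even-weight LTC in $\{0,1\}^n$. I then spend the remaining $m-l$ hidden units to add strong modes one at a time. Adding a hidden unit replaces the uniform mixture $p$ by $p'=\tfrac12 p+\tfrac12\tilde p$, where $\tilde p$ is the same zonoset mixture with every natural parameter shifted by the new weight row. Choosing that row large in the direction of a fresh even-weight codeword $c$ (at Hamming distance at least two from all current modes, which is automatic within the even-weight code) collapses $\tilde p$ toward $\delta_c$, so $p'$ gains $c$ as a new strong mode while the scale invariance of the strong-mode inequality preserves the old ones. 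Iterating $\min\{m-l,\,2^{n-1}-2^l\}$ times—the number of spare even-weight codewords—reaches $\min\{2^l+(m-l),2^{n-1}\}=L$ strong modes, realized with finite large weights, hence inside $u\cdot\Kcal_{n,m}$. With facts (i)--(ii) this gives both $u\cdot\Kcal_{n,m}\cap\mathcal{H}_{n,L}\neq\emptyset$ and the necessity $k\ge L$.

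\textbf{Main obstacle.} The delicate step is the mode addition of the second bullet: one must check that a single shifted-and-sharpened component contributes exactly one new strong mode \emph{without} destroying any previously installed one. The new unit deposits a little mass on the neighbours of the old modes, and this leakage (exponentially small in the weight magnitude) must be bounded against the slack in each old strong-mode inequality, uniformly over the up to $2^{n-1}$ additions and through the repeated halvings of $p$. Making these estimates quantitative, and verifying that the base LTC modes can be taken with strictly positive slack so the inequalities survive the halvings, is the only genuinely technical part; the remainder is bookkeeping with facts (i)--(ii) and Proposition~\ref{theorem3}.
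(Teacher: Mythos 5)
Your proposal is correct and follows essentially the same route as the paper: the paper's first item defers to the blockwise distance-two linear-threshold-code construction of \cite[Theorem~32]{MontufarMorton2012} (your $3$-hidden/$4$-visible even-weight gadget, which does exist, e.g.\ with generators $(1,1,-1,1),(1,-1,1,1),(-1,1,1,1)$ and bias $(-\tfrac12,-\tfrac12,-\tfrac12,-\tfrac32)$), and its second item is exactly your one-mode-per-extra-unit step, stated there as ``$\DBN(n,m+1,\ldots)$ represents $\lambda p+(1-\lambda)\delta_x$.'' Your write-up supplies more detail than the paper's two-sentence proof, including the slack bookkeeping it leaves implicit.
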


\subsection{Submodels of DBNs from probability sharing}\label{sharing}

The idea of this subsection is to propagate the probability mass of distributions generated by the top RBM of a DBN across the network, in order to learn something about the visible  probability distributions at the bottom. 
This can be accomplished by describing the products of kernels $\Kcal_{n_{l-1},n_{l-2}}\cdot \Kcal_{n_{l-2},n_{l-3}} \cdots \Kcal_{n_1,n_0}$. 
For simplicity we shall consider layers of same width as the visible layer, $n$. 
In this case the propagation can be interpreted as a process in the graph of a hypercube. 

A kernel realizes sharing of probability from a state $a\in\{0,1\}^n$ to a state $b\in\{0,1\}^n$ if its $a$th row has non-vanishing $b$th entry. 
It is possible to share probability from $a$ to a collection of states $b^{(1)},\ldots,b^{(s)}$ in arbitrary ratios by a product of $l$ kernels iff the $a$th row of a product of kernels in $\Kcal_{n,n}^l$ can be made an arbitrary distribution on  $b^{(1)},\ldots,b^{(s)}$. 
In particular, since all rows of zonoset kernels are product distributions, probability sharing from one state to more than two states, in arbitrary rations, is not possible in one single DBN layer.

An {\em $l$-path} on the graph of the $n$-cube is a list $S$ of $l$ vectors in $\{0,1\}^n$ with subsequent elements differing in at most one bit, $S_1,\ldots,S_l\in\{0,1\}^n$, $d_H(S_{k},S_{k+1})\leq1$. 
An $n$-bit {\em Gray code} of length $l$ is a special $l$-path with different subsequent elements. 
The transition sequence $T$ of a path is the list of bit-indices where the subsequent elements differ from each other (possible empty).

Let $\Scal(\RBM_{n,m})$ denote the collection of support sets of all faces of the probability simplex $\Delta_{2^n-1}$, which are contained in $\overline{\RBM_{n,m}}$. 
It is  known that any union of $(m+1)$ edges of the $n$-cube is  is in $\Scal(\RBM_{n,m})$, see~\cite[Theorem~1]{Montufar2011}. 
Consider some $R\in \Scal(\RBM_{n,n})$ 
and a collection of  $l$-paths $S^i$ starting from $R$, such that at any time $1\leq t\leq l-1$ two paths change the same bit only if they are visiting neighboring points. 
We denote the  collection of all such sets by  
\begin{equation}
\Sbb_n^l := \big\{ \cup_{i\in R} S^{i} \big| \cup_i S^i_{1}=R\in \Scal(\RBM_{n,n}),  T^{i}_t\neq T^{j}_t \text{ unless } d_H(S^{i}_t,S^{j}_t)=1\big\}\;. \label{graydbns}
\end{equation}

The following result generalizes~\cite[Lemma~1, Theorems~1 and~2]{LeRoux2010} to DBNs with any  number of layers  of constant width:  

\begin{lemma}
\label{propo1}
The model $\DBN(n,\ldots,n)$  with $l$ hidden layers  contains any probability distribution with support in an element of $\Sbb_{n}^l$.   
\end{lemma}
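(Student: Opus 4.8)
My plan is to read the downward-directed part of $\DBN(n,\ldots,n)$ as a process that transports the probability mass produced by the top $\RBM_{n,n}$ through the graph of the $n$-cube, interpreting each directed layer as one simultaneous time step in which every path $S^i$ advances by at most one edge. Concretely I would argue by induction on the number of hidden layers, using the identity $\DBN(n_0,\ldots,n_l)=\DBN(n_1,\ldots,n_l)\cdot\Kcal_{n_1,n_0}$ to peel off the bottom directed layer at each step, so that the final kernel realizes the last transition of every path while the remaining network realizes the truncated paths.

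For the base case (one hidden layer) the model is $\RBM_{n,n}$ and every path is a single point, so $\bigcup_i S^i=R\in\Scal(\RBM_{n,n})$; by definition of $\Scal$ the face of $\Delta_{2^n-1}$ supported on $R$ lies in $\overline{\RBM_{n,n}}$, which yields every distribution with support in $R$. For the inductive step I truncate each path to $\tilde S^i=(S^i_1,\ldots,S^i_{l-1})$. The conditions defining $\Sbb^l_n$, restricted to the times $t\le l-2$ (the starting points still forming $R$), show that $\bigcup_i\tilde S^i\in\Sbb^{l-1}_n$, so by the induction hypothesis the network above the bottom layer realizes (in closure) an arbitrary distribution $q$ supported on $\bigcup_i\tilde S^i$. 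It then suffices to produce one kernel $K\in\overline{\Kcal_{n,n}}$ that sends each moving row $S^i_{l-1}$ to a distribution on the edge $\{S^i_{l-1},S^i_{l}\}$ and leaves every other row a point mass, and to choose $q$ together with the split ratios of $K$ so that $q\cdot K$ is the prescribed target $p$.

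The existence of such a $K$ is exactly what part~\ref{cuatro} of Proposition~\ref{propo4} supplies, and here the compatibility clause ``$T^i_t\neq T^j_t$ unless $d_H(S^i_t,S^j_t)=1$'' carries the combinatorial argument. Because the $n$-cube contains no triangles, this clause forces at most two paths to move in any single coordinate direction at a given time and those two to occupy adjacent vertices, which is precisely the configuration of disjoint parallel edges grouped by direction allowed by Proposition~\ref{propo4}; it also forbids two paths from leaving a common vertex in distinct directions, a split to three states that no single zonoset-kernel row, being a product distribution, can perform. For the probability values in the disjoint case I set $q(v)=p(v)$ off the moving vertices, $q(S^i_{l-1})=p(S^i_{l-1})+p(S^i_{l})$, and choose the split ratio of row $S^i_{l-1}$ to deposit $p(S^i_l)$ on the new endpoint while retaining $p(S^i_{l-1})$; these values are nonnegative and sum to one, and by induction such a $q$ is realizable, so $q\cdot K=p$.

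The step I expect to be the main obstacle is the bookkeeping when the paths are not vertex-disjoint. When several endpoints $S^i_l$ coincide, when a new endpoint falls on an already-occupied vertex, or when one vertex $S^i_{l-1}$ feeds more than one transition, the clean equations above must be replaced by their aggregated forms, and one must check that these remain simultaneously solvable with admissible (nonnegative, normalized) ratios; again it is the compatibility clause that guarantees no vertex is ever required to split in two different directions. A secondary technical point, easily absorbed, is that the whole construction lives in closures: the point-mass rows of $K$ and the RBM distributions supported exactly on $R$ are boundary points of $\Kcal_{n,n}$ and $\RBM_{n,n}$, so $p$ is obtained as a limit of DBN distributions, which is what the support-set statement requires.
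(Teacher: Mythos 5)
Your proof is correct and follows essentially the same route as the paper: the paper's own proof of Lemma~\ref{propo1} is just the two-line remark that the result is a ``straightforward generalization'' of Le~Roux--Bengio's Lemma~1 and Theorems~1--2, whose hypotheses the elements of $\Sbb_n^l$ satisfy by definition, and your layer-peeling induction via $\DBN(n_0,\ldots,n_l)=\DBN(n_1,\ldots,n_l)\cdot\Kcal_{n_1,n_0}$ with Proposition~\ref{propo4}, item~\ref{cuatro} supplying the single-layer sharing kernel is precisely the argument being outsourced (the paper itself notes that item~\ref{cuatro} summarizes those theorems). The overlapping-path bookkeeping you flag is glossed over by the paper entirely, so your writeup is strictly more explicit than the published proof.
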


For some elements of $\Sbb_n^l$ we find an explicit description: 
\begin{proposition}\label{propo2} 
If $n\geq N (2^k+k+1)$ and $l\geq 2^{2^k}$ for some $k\in\NN$, then $ \Sbb_{n}^l$ contains the union of $N$ arbitrary $(2^k+k+1)$-dimensional faces of the $n$-cube with disjoint free coordinates.  In particular, when $l\geq 2^n/2(n-\log(n))$, the entire state space $\{0,1\}^n$ is an element of $\Sbb_n^l$. 
\end{proposition}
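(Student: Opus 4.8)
The plan is to prove the first statement by induction on $k$ and then optimize $k$ against $n$ for the ``in particular'' claim. Write $d_k := 2^k + k + 1$ and $l_k := 2^{2^k}$; the argument is driven by the two recursions $d_k = d_{k-1} + (2^{k-1}+1)$ and $l_k = l_{k-1}^2$. First I would reduce the $N$-face case to $N=1$. Given $N$ faces with pairwise disjoint free-coordinate blocks $F_1,\ldots,F_N$, each $|F_i| = d_k$ (possible since $n \geq N d_k$), I would build for each $i$ a family of $l_k$-paths that lies inside the $i$-th face, moves only coordinates in $F_i$, and covers that face, starting from a set $R_i$ of $2^k$ disjoint edges. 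Since the $F_i$ are disjoint, paths attached to different faces never flip a common coordinate, so the collision condition $T^i_t \neq T^j_t$ unless $d_H(S^i_t,S^j_t)=1$ holds automatically across faces and reduces to the single-face condition within each. The combined starting set is a union of $N 2^k \leq n < n+1$ disjoint edges, hence lies in $\Scal(\RBM_{n,n})$; and since any $l_k$-path extends to an $l$-path by stationary steps, $\Sbb_n^{l} \supseteq \Sbb_n^{l_k}$ for all $l \geq l_k$. This reduces everything to the single face.

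For a single $d_k$-face I would prove, by induction on $k$, the stronger statement that it decomposes into $2^{k+1}$ vertex-disjoint $l_k$-paths whose starting points form $2^k$ disjoint edges and which satisfy the matching condition: at each time step the edges simultaneously traversed are pairwise vertex-disjoint. Since $2^{k+1} l_k = 2^{d_k}$, such a system partitions the face exactly. The base case $k=0$ is direct: the $2$-face $\{00,01,10,11\}$ in its two free coordinates is covered by the two length-$2$ paths $00\to01$ and $10\to11$, which start from the single edge $\{00,10\}\in\Scal(\RBM_{n,n})$ and flip their coordinate from adjacent positions. For the step I would view the $d_k$-face as $\{0,1\}^{d_{k-1}}\times\{0,1\}^{J}$ with $|J|=2^{k-1}+1$, take the inductive decomposition of the first factor together with a reflected Gray code on all but one of the coordinates of $J$, and form a product traversal that alternates full length-$l_{k-1}$ runs of the first-factor decomposition with single Gray steps. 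With $l_{k-1}$ Gray positions and runs of length $l_{k-1}$ the resulting paths have length $\approx l_{k-1}^2 = l_k$, while the one remaining coordinate of $J$ supplies the doubling of the path count from $2^k$ to $2^{k+1}$ and of the starting edges from $2^{k-1}$ to $2^k$.

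The hard part will be the matching condition for the combined schedule. A naively synchronous product fails: if all paths take the same Gray step at once, or run the first-factor decomposition in lockstep while sitting at distinct second-factor positions, then paths at Hamming distance $\geq 2$ flip a common coordinate, which is forbidden---indeed no three cube vertices are pairwise adjacent, so at most two paths may share a flipped bit at any instant. The crux is therefore to interleave the first-factor runs and the $J$-Gray steps with phase offsets, read off recursively from the matching property of the first-factor decomposition, so that the globally traversed edges form a matching at every step; this is exactly where the near-tight budget $l_k = l_{k-1}^2$ is consumed, and where the bookkeeping is heaviest. Granting this interleaving, the induction closes and the $N=1$ case---hence the full first statement---follows.

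Finally, for the ``in particular'' claim I would set $N=1$ and take the face to be the whole cube, so $d_k = n$. Solving $2^k + k + 1 = n$ gives $l_k = 2^{2^k} = 2^{\,n-k-1} = 2^n/\big(2(n-k-1)\big)$ with $k\approx\log_2 n$, whence $n-k-1 \approx n-\log n$ and $l_k \approx 2^n/\big(2(n-\log n)\big)$; for $n$ not of this exact form one takes the least $k$ with $2^k+k+1\geq n$ and invokes monotonicity in $l$. Thus $\{0,1\}^n \in \Sbb_n^l$ whenever $l \geq 2^n/\big(2(n-\log n)\big)$, as claimed.
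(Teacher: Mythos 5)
Your overall architecture is sound and matches what a from-scratch proof would have to look like: the reduction of the $N$-face case to $N=1$ via disjointness of the free coordinates is correct (and the count $N2^k\le n<n+1$ disjoint starting edges, hence membership in $\Scal(\RBM_{n,n})$, is the right justification); the base case $k=0$ checks out; the numerology $2^{k+1}l_k=2^{d_k}$ forcing a vertex-disjoint partition into Gray-code paths is right; and the derivation of the ``in particular'' bound $l_k=2^{n-k-1}=2^n/(2(n-k-1))$ with $k\approx\log n$ reproduces the stated asymptotics. Note that the paper itself does none of this work: its proof is a one-line reduction to an external result (Lemma~4 of the Mont\'ufar--Ay reference), together with the observation that a sub-DBN of width $n-R$ sits inside the width-$n$ DBN with the remaining $R$ units pinned to a point measure. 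So you are attempting strictly more than the paper does.

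The problem is that you have not actually proved the one step that carries all the content. Your induction hinges on producing an interleaving of the $2^k$ inherited first-factor runs with the Gray steps on the $2^{k-1}$ extra coordinates such that, at every single one of the $l_k-1$ time steps, any two paths flipping the same bit are at Hamming distance one. You correctly diagnose that the naive synchronous product violates this, that the triangle-freeness of the cube caps any shared flip at two paths, and that the zero-slack budget ($l_k=l_{k-1}^2$ exactly, so no path may ever idle or revisit a vertex) leaves no room to dodge collisions by waiting. But then you write ``granting this interleaving, the induction closes.'' That is precisely the lemma being cited by the paper, not a routine verification: one must exhibit a concrete schedule (phase offsets, pairings of paths across the doubling coordinate, and a consistent choice of which member of each adjacent pair flips first) and check the matching condition for all $t$. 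Without it the inductive step is an unproven assertion, and the proposal does not constitute a proof. To repair it, either carry out the explicit scheduling (the standard device is to pair the $2^{k+1}$ paths into $2^k$ adjacent couples across the extra coordinate, let each couple execute the same first-factor run in lockstep --- which is legal because the two members stay at Hamming distance one --- and stagger the Gray steps of distinct couples so that no two couples ever flip the same $J$-coordinate simultaneously), or do what the paper does and invoke the published lemma.
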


\section{Expressive power and approximation errors of DBNs}\label{section:mainresults}

In this section we describe some submodels of DBNs explicitly, and use them to  bound the approximation errors of DBNs from above. 

Let $\varrho=\{A_1,\ldots,A_K\}$ be a partition of $\{0,1\}^n$. 
The {\em partition model} $\Mcal_\varrho$ is the set of all probability distributions with $p(x)=p(y)$ whenever $x$ and $y$ belong to the same block $A_i$ of the partition $\varrho$. 

The following  collects some results shown in the previous section: 
\begin{theorem}
\label{mainres}
Let $l\in\mathbb{N}$. Let $k$ be the largest natural number for which $l-1\geq 2^{2^k}$, and let $K= 2^k + k + 1 \leq n$. 
The model $\DBN(n,\ldots,n)$  with $l$ hidden layers   contains: 
\vspace{-.2cm}
\begin{itemize}[leftmargin=1.3em]\setlength{\itemindent}{0em}
\item \label{fith}
Any $p\in\Delta_{2^n-1}$ with support contained in an element of $\Sbb_{n}^l$. 
\item \label{secth}
Any partition model $\Mcal_\varrho$ with partition $\varrho=\{ [y_{\lambda}] \}_{y_\lambda\in\{0,1\}^K}$, $\lambda\subseteq[n], |\lambda|=K$. 
\end{itemize}
\end{theorem}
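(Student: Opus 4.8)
The plan is to read this theorem as a repackaging of three results established earlier, so the proof is mostly bookkeeping. The first bullet is literally Lemma~\ref{propo1}: by the definition of $\Sbb_n^l$, the model $\DBN(n,\ldots,n)$ with $l$ hidden layers contains every distribution whose support lies in an element of $\Sbb_n^l$, which is exactly the assertion. Hence the only content to organize is the second bullet, concerning the partition model $\Mcal_\varrho$.

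For $\Mcal_\varrho$ with $\varrho=\{[y_\lambda]\}_{y_\lambda\in\{0,1\}^K}$ I would use the layerwise identity $\DBN(n_0,\ldots,n_l)=\DBN(n_1,\ldots,n_l)\cdot\Kcal_{n_1,n_0}$ from Section~\ref{subsection:ZMP}, peeling off the bottom transition $\Kcal_{n,n}$ and leaving a DBN with $l-1$ hidden layers on top. For the bottom kernel I would invoke Proposition~\ref{propo4}(\ref{tres}) with fixed visible coordinates $\lambda$ and free hidden coordinates $\Lambda$, $|\lambda|=|\Lambda|=K$: there is a zonoset kernel $K_{W,B}$ whose rows $h\in\C$, where $\C$ is the $K$-face of the hidden cube with free coordinates $\Lambda$, are the uniform distributions $p_{hW+B}$ on the blocks $\{x\colon x_\lambda=h_\Lambda\}=[h_\Lambda]$. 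Since $h\mapsto h_\Lambda$ is a bijection from $\C$ onto $\{0,1\}^\Lambda\cong\{0,1\}^K$, these $2^K$ rows are precisely the normalized indicators of the $2^K$ blocks of $\varrho$.

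It then remains to feed this kernel an arbitrary distribution supported on $\C$. Here I would apply Proposition~\ref{propo2} with $N=1$ to the top DBN: because $k$ is chosen as the largest integer with $l-1\geq 2^{2^k}$ and $K=2^k+k+1\leq n$, the hypotheses of that proposition hold for $l-1$ layers, so $\Sbb_n^{l-1}$ contains the $K$-face $\C$; Lemma~\ref{propo1} applied to the $(l-1)$-layer top DBN then yields every distribution $q$ supported on $\C$. Composing, the visible distribution is $q\cdot K_{W,B}=\sum_{h\in\C}q(h)\,p_{hW+B}$, a mixture of uniform-on-block distributions with arbitrary block weights $q(h)$; as $q$ ranges over all distributions supported on $\C$, this ranges over exactly the distributions constant on each block of $\varrho$, i.e.\ all of $\Mcal_\varrho$.

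Since each step is routine given the cited statements, the main thing to get right is the index bookkeeping: aligning the free coordinates $\Lambda$ of the hidden face $\C$ with the fixed coordinates $\lambda$ of the target partition, and verifying that the choice of $k$ together with $K\leq n$ is exactly what Proposition~\ref{propo2} requires for the $(l-1)$-layer top part. The one genuinely substantive point I would flag, rather than leave implicit, is that the uniform-on-block rows of Proposition~\ref{propo4}(\ref{tres}) are attained only as a limit of zonoset kernels (as in Proposition~\ref{markovkernels}), so the containment of $\Mcal_\varrho$ should be understood in the closure $\overline{\DBN}$.
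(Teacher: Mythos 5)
Your proof is correct and matches the paper's intent: the theorem is presented as a collection of earlier results, and your assembly — Lemma~\ref{propo1} for the first bullet, and for the second bullet Proposition~\ref{propo2} plus Lemma~\ref{propo1} applied to the top $(l-1)$-layer DBN composed with the kernel of Proposition~\ref{propo4}(\ref{tres}) — is exactly the decomposition the hypothesis $l-1\geq 2^{2^k}$ is designed for. Your closing remark that the containment should be read in the closure (since the uniform-on-block rows arise as limits $\alpha\to\infty$) is a fair and worthwhile clarification consistent with the paper's conventions elsewhere.
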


If $K\geq n$, then the $\DBN$ is a universal approximator, which is consistent with~\cite[Theorem~1]{Montufar2011}.

The Kullback-Leibler divergence from a point $p$ to a model  $\Mcal$ in $\Delta_{2^n-1}$ is defined as $D(p\|\Mcal):=\inf_{q\in\Mcal}D(p\|q)$, where $D(p\|q):=\sum_{x\in\{0,1\}^n}p(x)\log\frac{p(x)}{q(x)}$ is the divergence from $p$ to $q$. 
The {\em maximal KL-divergence}~\cite{MatusAy03:On_Maximization_of_the_Information_Divergence, NIPS2011_0307} from a partition model $\Mcal_\varrho$ with $2^K$ blocks of cardinalities $2^{n-K}$, as  given in the second item of Theorem~\ref{mainres}, is $\max_{p\in\Delta_{2^n-1}}D(p\|\Mcal_\varrho)=(n-K)$, see~\cite[Corollary~3.1]{NIPS2011_0307}. 
The {\em Dirichlet prior} on $\Delta_{2^n-1}$ with concentration parameter $\boldsymbol{\alpha}=(\alpha_x)_{x\in\{0,1\}^n}$ is $\Dir_{\ba}(p):=\frac{1}{\sqrt{2^n}}\frac{\Gamma(\sum_{x}\alpha_x)}{\prod_{x}\alpha_x} \prod_{x} p(x)^{\alpha_x-1}$ for all $p\in\Delta_{2^n-1}$, whereby the sums and products are over $x\in\{0,1\}^n$. 
If $p$ is drawn from this prior, then the expected approximation error  is, see~\cite[Theorem~4]{MonRauh2012}: 
\begin{equation}
\mathbb{E}[ D(p\|\Mcal_\varrho)]=
(n-K)\ln(2) + \sum_{x\in\{0,1\}^n} \frac{\alpha_x}{\sum_y\alpha_y} h(\alpha_x) 
 - \sum_{j=1}^{2^K} \frac{\sum_{x\in A_j} \alpha_x}{\sum_y\alpha_y}h(\sum_{x\in A_j} \alpha_x) 
\;,  \label{expectdpartition}
\end{equation} 
where $h(k):=1+\frac12+\cdots+\frac1k$ denotes the $k$th {\em harmonic number}.   

The  approximation error of a DBN is bounded from above by the approximation error of any of its submodels. 
If we use any of the partition models with $2^K$ blocks of cardinalities $2^{n-K}$, we get: 
\begin{theorem}\label{approxerrors}
Consider a DBN with $l$ hidden layers of width $n$. 
\vspace{-.2cm}
\begin{itemize}[leftmargin=1.3em]\setlength{\itemindent}{0em}
\item 
The maximal KL-approximation error of this model is bounded from above by 
\begin{equation*}
\max_{p\in\Delta_{2^n-1}}D(p\| \DBN) \leq n-K\;,\quad\text{where $K=2^k + k + 1 = \log(2l\log(l))$}. 
\end{equation*} 
\item 
The expected KL-approximation error is bounded from above by eq.~\eqref{expectdpartition}. 
In particular, if $p$ is drawn uniformly at random from the probability simplex $\Delta_{2^n-1}$, then the expected divergence $\mathbb{E}[D(p\|\DBN) ]$ is bounded from above by $1 +\ln(2^{n-K})- h(2^{n-K})$. 
\end{itemize}
\end{theorem}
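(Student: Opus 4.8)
The plan is to reduce both bounds to the partition submodel identified in Theorem~\ref{mainres} and then invoke monotonicity of the KL approximation error under model inclusion. The basic observation is that for any two models $\Mcal'\subseteq\Mcal$ and any target $p$, taking the infimum of $D(p\|\cdot)$ over the larger set $\Mcal$ can only decrease it, so $D(p\|\Mcal)\leq D(p\|\Mcal')$; this pointwise inequality is preserved under both the supremum over $p$ and the expectation over $p$. Concretely, I would invoke the second item of Theorem~\ref{mainres}: with $k$ the largest integer satisfying $l-1\geq 2^{2^k}$ and $K=2^k+k+1$, the model $\DBN(n,\ldots,n)$ with $l$ hidden layers contains the partition model $\Mcal_\varrho$ for $\varrho=\{[y_\lambda]\}_{y_\lambda\in\{0,1\}^K}$ with $|\lambda|=K$. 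This $\Mcal_\varrho$ has exactly $2^K$ blocks, each of cardinality $2^{n-K}$, since fixing $K$ of the $n$ coordinates partitions $\{0,1\}^n$ into equal-size cylinder sets.

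For the maximal error I would apply the identity $\max_{p}D(p\|\Mcal_\varrho)=n-K$ for a partition model with $2^K$ equal blocks, recorded above from~\cite[Corollary~3.1]{NIPS2011_0307}. Combined with monotonicity this gives $\max_p D(p\|\DBN)\leq\max_p D(p\|\Mcal_\varrho)=n-K$. The stated form $K=\log(2l\log(l))$ is then just the asymptotic solution of $l-1\geq 2^{2^k}$: the defining inequality yields $2^k\approx\log_2 l$ and hence $k\approx\log_2\log_2 l$, so $2^k+k+1\approx\log_2(2l\log l)$.

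For the expected error, monotonicity gives $\mathbb{E}[D(p\|\DBN)]\leq\mathbb{E}[D(p\|\Mcal_\varrho)]$, and the right-hand side is exactly eq.~\eqref{expectdpartition}. The ``in particular'' claim is the special case of the Dirichlet prior with concentration $\boldsymbol{\alpha}=\mathbf{1}$, which is the uniform distribution on $\Delta_{2^n-1}$. Substituting $\alpha_x=1$ into eq.~\eqref{expectdpartition} I would use $\sum_y\alpha_y=2^n$, $h(\alpha_x)=h(1)=1$, and $\sum_{x\in A_j}\alpha_x=2^{n-K}$ for each of the $2^K$ blocks; the middle sum then collapses to $1$ and the final sum to $h(2^{n-K})$, leaving $1+(n-K)\ln 2-h(2^{n-K})=1+\ln(2^{n-K})-h(2^{n-K})$.

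There is no serious obstacle here: the statement is an assembly of the partition submodel from Theorem~\ref{mainres} with the two divergence formulas already recorded in the text. The only points requiring care are making the monotonicity step explicit for the infimum-based divergence, and carrying out the $\boldsymbol{\alpha}=\mathbf{1}$ substitution cleanly; in particular one should note that the general expected-error bound eq.~\eqref{expectdpartition} holds for every Dirichlet prior, whereas the closed form $1+\ln(2^{n-K})-h(2^{n-K})$ is specific to the uniform case.
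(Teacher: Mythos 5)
Your proposal is correct and matches the paper's own (implicit) argument: the paper presents Theorem~\ref{approxerrors} as an immediate consequence of the sentence preceding it --- monotonicity of $D(p\|\cdot)$ under model inclusion applied to the partition submodel of Theorem~\ref{mainres}, combined with the quoted maximal-divergence formula from~\cite{NIPS2011_0307} and the Dirichlet expected-divergence formula~\eqref{expectdpartition} specialized to $\boldsymbol{\alpha}=\mathbf{1}$. Your substitution yielding $1+\ln(2^{n-K})-h(2^{n-K})$ and your reading of $K=\log(2l\log l)$ as the asymptotic solution of $l-1\geq 2^{2^k}$ are both what the paper intends.
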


\section{Discussion}\label{section:discussion}

Deep belief networks generate mixtures of tuples of product distributions  whose parameters 
are projections of hypercubes' vertices (zonosets), described by very few shared parameters. 
We cast these tuples of product distributions as the rows of stochastic matrices (zonoset kernels), and studied properties  such as their rank, symmetries, and combinatorics. 

This analysis exposes similarities of DBNs and DBMs, and shows possible ways of defining distributed mixtures of products; e.g., as $\Ecal\cdot\Kcal$, with a low-dimensional model $\Ecal\in\Delta_{2^m-1}$, and a family of kernels $\Kcal$. The rows of each kernel in the family $\Kcal$
 can be chosen as product distributions with parameters equal to the projected vertices of a hypercube, or  the projected vertices of any other low-dimensional polytope. 
In contrast, standard, unrestricted mixtures of products, correspond to projected vertices of (high-dimensional) simplices. 

Kernels are helpful for understanding probability sharing in layered networks. 
We showed explicit classes of probability distributions than can be learned by DBNs depending on the number of hidden layers that they contain. 
Various submodels of RBMs with $k$ parameters, such as unions of partition models, can be learned by deep and narrow DBNs with $k$ parameters. We showed that the maximal approximation error of narrow DBNs is not larger than the  upper bounds on the approximation errors of RBMs with the same number of parameters shown in~\cite{NIPS2011_0307}. 

Furthermore, we bounded the expected approximation error of DBNs from above. 
Our bounds are with respect to Dirichlet priors. These priors do not only have technical advantages, but  are  a canonical choice when no information is availble about the real distribution of the targets. 
It could be interesting to consider other priors  in future work. 
We note in particular, that the exact expected error formula from Theorem~\ref{approxerrors} item~2, eq.~\ref{expectdpartition}, can be integrated over an hyperprior of interest.

The approximation error bounds from Theorem~\ref{approxerrors} can possibly be improved by taking into account the totality of DBN submodels described in this paper, instead of just partition models. 
It is worth mentioning that any DBN which is a graphical supermodel of $\DBN(n_0, n_0-1, n_0-2,  \dots, 1)$ has the general Markov model corresponding to any tree on $n_0$ leaves as a graphical submodel.  That is, this DBN contains the union of all such tree models.  
Furthermore, DBNs often contain Hadamard  products of trees as well, so it is possible to study their dimension by {\em tropicalization}~\cite{tropical}.

\subsubsection*{Acknowledgments}
This work is supported in part by DARPA grant FA8650-11-1-7145.

\begin{small}
\bibliography{referenzen}{}

\begin{thebibliography}{10}

\bibitem{Bengio-2009}
Y.~Bengio.
\newblock Learning deep architectures for {AI}.
\newblock {\em Found. Trends Mach. Learn.}, 2(1):1--127, 2009.

\bibitem{BengioDelalleauExpressiveDeep11}
Y.~Bengio and O.~Delalleau.
\newblock On the expressive power of deep architectures.
\newblock In J.~Kivinen, C.~Szepesv{\'a}ri, E.~Ukkonen, and T.~Zeugmann,
  editors, {\em ALT}, volume 6925 of {\em Lecture Notes in Computer Science},
  pages 18--36. Springer, 2011.

\bibitem{Brown86:Fundamentals_of_Exponential_Families}
L.~Brown.
\newblock {\em Fundamentals of Statistical Exponential Families: With
  Applications in Statistical Decision Theory}.
\newblock Institute of Mathematical Statistics, Hayworth, CA, USA, 1986.

\bibitem{Catalisano2011}
M.~V. Catalisano, A.~V. Geramita, and A.~Gimigliano.
\newblock Secant varieties of $\mathbb{P}^1\times\dots\times\mathbb{P}^1$
  ($n$-times) are not defective for $n\geq5$.
\newblock {\em J. Algebraic Geometry}, 20:295--327, 2011.

\bibitem{Cueto2010}
M.~A. Cueto, J.~Morton, and B.~Sturmfels.
\newblock Geometry of the restricted {B}oltzmann machine.
\newblock In M.~A.~G. Viana and H.~P. Wynn, editors, {\em Algebraic methods in
  statistics and probability II, AMS Special Session}, volume~2. American
  Mathematical Society, 2010.

\bibitem{Freund1992}
Y.~Freund and D.~Haussler.
\newblock Unsupervised learning of distributions on binary vectors using
  2-layer networks.
\newblock In {\em Advances in Neural Information Processing Systems 4}, pages
  912--919. 1992.

\bibitem{hartshorne1977algebraic}
R.~Hartshorne.
\newblock {\em Algebraic Geometry}.
\newblock Graduate Texts in Mathematics. Springer, 1977.

\bibitem{Hinton99productsof}
G.~E. Hinton.
\newblock Products of experts.
\newblock In {\em Proceedings 9-th ICANN}, volume~1, pages 1--6, 1999.

\bibitem{Hinton2002}
G.~E. Hinton.
\newblock Training products of experts by minimizing contrastive divergence.
\newblock {\em Neural Computation}, 14:1771--1800, 2002.

\bibitem{Hinton2006}
G.~E. Hinton, S.~Osindero, and Y.~Teh.
\newblock A fast learning algorithm for deep belief nets.
\newblock {\em Neural Computation}, 18:1527--1554, 2006.

\bibitem{LeRoux2010}
N.~Le~Roux and Y.~Bengio.
\newblock Deep belief networks are compact universal approximators.
\newblock {\em Neural Computation}, 22:2192--2207, 2010.

\bibitem{MatusAy03:On_Maximization_of_the_Information_Divergence}
F.~Mat\'u\v{s} and N.~Ay.
\newblock On maximization of the information divergence from an exponential
  family.
\newblock In {\em Proceedings of the WUPES'03}, pages 199--204. University of
  Economics, Prague, 2003.

\bibitem{Montufar2010}
G.~Mont\'ufar.
\newblock Mixture models and representational power of {RBMs}, {DBNs}, and
  {DBMs}.
\newblock {\em Deep Learning and Unsupervised Feature Learning
  Workshop---NIPS'10}, 2010.

\bibitem{Montufar2010a}
G.~Mont{\'u}far.
\newblock Mixture decompositions of exponential families using a decomposition
  of their sample spaces.
\newblock {\em To appear in Kybernetika}, 2012.
\newblock Preprint available at \url{http://arxiv.org/abs/1008.0204}.

\bibitem{Montufar2011}
G.~Mont{\'u}far and N.~Ay.
\newblock Refinements of universal approximation results for deep belief
  networks and restricted {B}oltzmann machines.
\newblock {\em Neural Computation}, 23(5):1306--1319, 2011.

\bibitem{MontufarMorton2012}
G.~Mont\'ufar and J.~Morton.
\newblock When does a mixture of products contain a product of mixtures?
\newblock {\em Deep Learning and Unsupervised Feature Learning
  Workshop---NIPS'12}, 2012.
\newblock Preprint available at \url{http://arxiv.org/abs/1206.0387}.

\bibitem{MonRauh2012}
G.~Mont{\'u}far and J.~Rauh.
\newblock Scaling of model approximation errors and expected entropy distances.
\newblock In {\em To appear in WUPES'12}. University of Economics, Prague,
  2012.
\newblock Preprint available at \url{http://arxiv.org/abs/1207.3399}.

\bibitem{NIPS2011_0307}
G.~Mont\'ufar, J.~Rauh, and N.~Ay.
\newblock Expressive power and approximation errors of restricted {B}oltzmann
  machines.
\newblock In J.~Shawe-Taylor, R.~Zemel, P.~Bartlett, F.~Pereira, and
  K.~Weinberger, editors, {\em Advances in Neural Information Processing
  Systems 24}, pages 415--423. 2011.

\bibitem{tropical}
L.~Pachter and B.~Sturmfels.
\newblock {Tropical geometry of statistical models}.
\newblock {\em Proceedings of the National Academy of Sciences of the United
  States of America}, 101(46):16132--16137, Nov. 2004.

\bibitem{salakhutdinov2009deep}
R.~Salakhutdinov and G.~Hinton.
\newblock Deep {B}oltzmann machines.
\newblock {\em Proceedings of the international conference on artificial
  intelligence and statistics}, 5(2):448--455, 2009.

\bibitem{kesten2003probability}
L.~Saloff-Coste.
\newblock {\em Probability on Discrete Structures}, chapter Random Walks on
  Finite Groups.
\newblock Encyclopaedia of Mathematical Sciences. Springer Verlag, 2003.

\bibitem{Smolensky1986}
P.~Smolensky.
\newblock Information processing in dynamical systems: foundations of harmony
  theory.
\newblock In {\em Symposium on Parallel and Distributed Processing}, 1986.

\bibitem{Hinton:2008}
I.~Sutskever and G.~E. Hinton.
\newblock Deep narrow sigmoid belief networks are universal approximators.
\newblock {\em Neural Computation}, 20:2629--2636, 2008.

\end{thebibliography}
\bibliographystyle{abbrv}
\end{small}

\appendix
\section*{Proofs}

\subsection*{Geometry and combinatorics of zonoset kernels}

\begin{proof}[Proof of Proposition~\ref{markovkernels}]
The kernel $K_{h^\ast_I}\equiv K_{[u_{h^\ast_I}]}$ has rows equal to the indicator functions of $h\oplus_2 [h^\ast_I]$, $h\in\{0,1\}^n$, multiplied by the constant $2^{-(n-|I|)}$. 
Note that $[h^\ast_I]= \be_i\oplus_2 [h^\ast_I]$ for all $i\in[n]\setminus I$. 
For each $v_{[n]\setminus I}\in\{0,1\}^{[n]\setminus I}$, the sets $(v_I,v_{[n]\setminus I}) \oplus_2 [h^\ast_I]$, $v_I\in\{0,1\}^I$  partition  $\{0,1\}^n$ into $2^{|I|}$ cylinder sets.  
The connection weights $W(i,j)= \alpha (-h_i^\ast + \frac12) \delta_i(j)  \mathds{1}_I(j)$ and the bias weights $B(j)= -\alpha \frac12  (-h_j^\ast + \frac12) \mathds{1}_I(j)$ produce the kernel 
\begin{equation*}
K_{W,B}(h,v)=\exp(\alpha \frac12(-h^\ast_{I\cap\supp h}+\frac12\mathds{1}_{I\cap\supp h}  + h^\ast_{I\setminus\supp h}-\frac12\mathds{1}_{I\setminus\supp h} ) v)/Z .
\end{equation*}  
The limit $\lim_{\alpha\to\infty} K_{W,B}$ is equal to $K_{{[h^\ast_I]}}$. 
To complete the proof we add the natural parameter vector $C_{I^c}$ of $p$ to the previously defined bias vector $B$. 
Then $K_{hW+B+C_{I^c}}$ satisfies the claims.  
\end{proof}

\begin{proof}[Proof of Proposition~\ref{toric}]
Replacing the parameters $W_{ij}$, $B_{j}$ with their exponentials $\omega_{ij}$ and $\beta_j$, we obtain a multigraded monomial map $Q:\CC^{nm+n} \ra \prod_{i=1}^m \PP^{2^n-1}$; $q_{h,v} = \prod_{j=1}^n \beta_j^{v_j} \prod_{i=1}^m \omega_{ij}^{h_i v_j}$. 
The Zariski closure of the image of this map is a multigraded toric variety inside a product of $2^m$, $(2^n-1)$-dimensional projective spaces, one for each hidden state.  This variety is cut out by a multigraded monomial ideal generated by the multigraded binomials appearing in the kernel.    
\end{proof}

\begin{proof}[Proof of Proposition~\ref{symmetricZonosetkernel}]
The rows of $K_{W,B}$ are the product distributions with natural parameters the zonoset generated by $W$ and $B$. 
For assessing the rank of $K_{W,B}$ we may neglect the normalizing constants, and consider the matrix $\tilde{K}_{W,B}$ with rows $(\exp((h W+ B)v) )_{v\in\{0,1\}^n}$, $h\in\{0,1\}^m$. 
Furthermore, for any $B$ with finite entries, the rank of $\tilde{K}_{W,B}$ and $\operatorname{diag}(\exp(-Bv))_v \cdot\tilde{K}_{W,B}=\tilde{K}_{W,\boldsymbol{0}}$ is equal. 

Given the assumptions, the zonoset $\Zcal=\{h W +B\colon h\in\{0,1\}^m\}$ is contained in a straight line $\Zcal = \{\lambda_j C + B\}_{j=1}^{2^m}$, whereby the numbers $\lambda_j\in\R$ are all different from each other, for almost all $(\alpha_k)_k\in\R^m$. 
Let $(t_1,\ldots,t_{2^n}):=(\exp(C v))_{v\in\{0,1\}^n}$. 
Note that $t_i>0$ for all $i$, and all $t_i$ are different from each other, for almost all $C\in\R^n$. 
The rank of $\tilde{K}_{W,B}$ is equal to the rank of 
$(t_i^{\lambda_j})_{i,j}$, 
which, after some permutation of rows and columns, is a generalized Vandermonde matrix, known to be totally positive. Hence $\det(K_{W,B}(h,v))_{h\in H, v\in V}\neq0$ for all $H\subseteq\{0,1\}^m$ and $V\subseteq\{0,1\}^n$, as claimed. 
\end{proof}

\begin{proof}[Proof of Proposition~\ref{claim2}]
1) First note that there is an open subset $\Omega\subset\R^{m\times n}\times\R^n$ of parameters ${W,B}$ for which the kernels $K_{W,B}$ are full rank: 
Assume that the zonoset $\{hW+B\colon h\in\{0,1\}^m\}$ intersects $2^m$ orthants of $\R^n$, e.g., $W=I_n$ and $B=\tfrac12 (1,\ldots,1)$. 
Then $K_{\alpha W,\alpha B}$ is full rank for all $\alpha$ larger than some $a\in\R$,  because for  $\alpha\to\infty$ each row of $K_{\alpha W,\alpha B}$ converges to a different point measure. 
2) Now, by Proposition~\ref{toric} $\Kcal_{m,n}$ is a (toric, irreducible) variety for all $m, n\in\NN_0$. 
Let $l=\min\{m,n\}$. 
The set $H$ of rank-deficient matrices in $\CC^{2^l\times2^l}$, or in $\prod_{i=1}^m \PP^{2^n-1}$, is a hypersurface cut out by the vanishing of the determinant (which is a homogeneous polynomial on the matrix entries). 
Since $\Kcal_{l,l}\not\subseteq H$, 
by~\cite[Proposition~7.1]{hartshorne1977algebraic}, 
every irreducible component of $\Kcal_{l,l}\cap H$ has dimension $\dim(\Kcal_{l,l})-1$. 
This is also an upper bound for the dimension of the real part of the set of rank-deficient kernels in $\Kcal_{l,l}$. 
\end{proof}

\begin{proof}[Proof of Example~\ref{RBMvsDRBM}]
Both models have the same zonoset kernels. 
For any choice of $W$ and $B$, the set of inputs of the directed RBM are the product distributions $\Mcal_m$. The set of inputs of the RBM are the distributions $q(h)=\frac{Z_h}{Z}\cdot \exp(C h)$, which are product distributions iff $z(h)=\frac{Z_h}{\sum_h Z_h}\in\Delta_{2^m-1}$ is a product distribution. 
In~\cite{Cueto2010} it is shown that $\{\frac1Z\sum_v\exp(hWv+Bv)\colon W,B\}$ is a set of dimension $mn+n$ when $n ...m $. 
Since $K_{W,B}$ is injective, each output has a unique preimage. 
\end{proof}

\begin{proof}[Proof of Corollary~\ref{corollarymix}]
By item~\ref{uno} of Proposition~\ref{propo4}, 
the set of distributions $q\in\Delta_{2^m-1}$ with support on a radius-one Hamming ball is mapped by $\Kcal_{m,n}$ into the $(m+1)$-mixture of product distributions $\Mcal_{n,m+1}$. The claim follows using that $\RBM_{n,m}$ contains any $p$ with $|\supp(p)|\leq m+1$, see~\cite{Montufar2011}.
That RBMs do not contain the mixture model is a result from~\cite{MontufarMorton2012}. 
\end{proof}

\subsubsection*{Patterns of modes in zonoset tuples}
\begin{proof}[Proof of Proposition~\ref{corollary1}]\mbox{}
\vspace{-.2cm}
\begin{enumerate}[leftmargin=0em]\setlength{\itemindent}{1.3em}
\item 
The first item follows from~\cite[Theorems~3 and~11]{MontufarMorton2012}.  
\item
For the first part:  
The number of strong modes of a  mixture of $k$ binary product distributions is at most $k$~\cite[Theorem~3]{MontufarMorton2012}. 
For the second part: 
If $n$ is odd and larger than one, then the smallest mixture of binary product distributions whose natural parameters are a zonoset and which approximates $u_{Z_{\pm,n}}$ arbitrarily well, has a zonoset generated by at least $n$ vectors. 
See~\cite[Proposition~14]{MontufarMorton2012}. 
\item
The first part of the third item follows from parameter counting: The model $\sum_h \frac1Z\exp((hW+B)v) p(h)$, $p\in\Delta_{2^{n-1}-1}$ has a total of $n^2+n +2^{n-1}-1$ parameters. This number is smaller than $\dim(\Delta_{2^n-1})=2^n-1$ when $n\geq7$. 
For the second part: Any mixture of binary product distributions which approximates some $p$ with support ${Z_{\pm,n}}$ arbitrarily well, mixes the $2^{n-1}$ Dirac distributions $\delta_v$, $v\in Z_{\pm,n}$, see~\cite{Montufar2010a}. Hence if $\DBN(n_0,\ldots,n_l)$ approximates any distribution $p$ with support $Z_{\pm,n}$ arbitrarily well, then the mixture weights (distributions from  $\DBN(n_1^l)$) approximate $p|_{Z_{\pm,n}}$ arbitrarily well. 
\qedhere
\end{enumerate}
\end{proof}
\begin{proof}[Proof of Proposition~\ref{theorem3}] This is a direct consequence of the analysis from~\cite{MontufarMorton2012}.
\end{proof}

\begin{proof}[Proof of Proposition~\ref{notinclpropodos}]
The proof of the first item follows the lines of the proof of~\cite[Theorem~32]{MontufarMorton2012}. 
For the second item, note that if $\DBN(n,m,\ldots)$ can represent some $p$, then $\DBN(n,m+1,\ldots)$ can represent $\lambda p + (1-\lambda)\delta_x$ for any $x\in\{0,1\}^n$ for some $0<\lambda<1$. 
\end{proof}

\begin{proof}[Proof of Theorem~\ref{propo1}]
This result is a straightforward generalization of~\cite[Lemma~1, Theorems~1 and~2]{LeRoux2010}. 
The elements of  $\Sbb_n^l$ meet the conditions of these lemma and theorems by definition. 
\end{proof}

\subsection*{Submodels of DBNs from probability sharing}
\begin{proof}[Proof of Proposition~\ref{propo2}]\mbox{}
\vspace{-.2cm}
\begin{enumerate}[leftmargin=1.3em]\setlength{\itemindent}{0em}
\item 
This follows immediately from~\cite[Lemma~4]{Montufar2011}. Any sub-DBN with layers of width $(n-R)$ is contained in the DBN with layers of width $n$. 
The distribution on the states of the remaining $R$ visible nodes can be set to a point measure. 
\item 
This follows from a similar argument as the first item. 
Any set of cardinality $(n+1)$ is an $S$-set of $\RBM_{n,n}$. 
\qedhere
\end{enumerate}
\end{proof}

\begin{proof}[Proof of Proposition~\ref{propo4}]\mbox{}
\vspace{-.2cm}
\begin{itemize}[leftmargin=1.3em]\setlength{\itemindent}{0em}
\item[\ref{uno}]
If  $\C=\{h^{(0)},\ldots,h^{(m)}\}$ are affinely independent, then  
$\{h^{(1)}-h^{(0)},\ldots, h^{(m)}-h^{(0)}\}$ are linearly independent and can be mapped by $W$ to an arbitrary set $\{W'_1,\ldots,W'_m\}\subset\R^n$. 
Choosing $B=B' - h^{(0)} W$,  we can make $\{hW +B\colon h\in\C\}$ be arbitrary vectors $B', W'_{1},\ldots,W'_{m}$, and so, $\{p_{hW+B}\colon h\in\C\}$ is an arbitrary set of $m+1$ product distributions.

\item[\ref{dos}]
Any $h$ can be identified with its support set. 
$p_{\{i\}}$ $i\in[m]$ are $m$ uniform distributions on arbitrary faces $F_i$ of the $n$-cube. 
$p_{\lambda}$ is uniformly distributed with support $\operatorname{argmax}(\sum_{i\in\lambda} e_{F_i})$. 
E.g., if $F_\lambda:=\cap_{i\in\lambda} F_i\neq\emptyset$, then $\supp(p_\lambda)= F_\lambda$. 

\item[\ref{tres}] This follows from the choice $W_{:,\lambda}=\alpha I_\lambda$, the identity matrix, and $W_{:,[n]\setminus\lambda}=0$.

\item[\ref{cuatro}] 
Consider any $l\in[n]$. 
Consider a pair of vectors $\{x,y\}$ which is an edge of $\{0,1\}^m$. Let $r\in[m]$ be the entry where they differ. Let $s\in[m]$ be arbitrary. 
Denote by $\hat x$ the vector $\hat x_i = x_i\,\forall i\neq r,s$ and $\hat x_r=0$, $\hat x_s=0$. Denote by $\be_i$ the vector with one $1$ at the position $i$ and zeros else. By $\mathds{1}$ the vector of ones. 
Choosing 
\begin{eqnarray*}
W_{:,l} &=& \omega ( 2\hat x - \hat{\mathds{1}} + (1-2 x_s)m \be_s  + (p-q)\be_r ) \\
b_l  &=& -\omega (|\supp(x)|-1 + x_s m ) + q
\end{eqnarray*}
 yields in the limit $\omega\to\infty$ that $P(v_l=h_s | h\neq x,y ) =1$, $P(v_l = 1 | h=x ) = p$, and $P(v_l = 1 | h=y ) = q$, i.e.,   
\begin{eqnarray*}
 P(v_l| h\neq x,y) &=& \delta_{h_s}(v_l) \\
 P(v_l| h=x) &=& p(v_l) \\
 P(v_l| h=y) &=& q(v_l) \;.
\end{eqnarray*}
Consider the case $m = n$. 
Let $\{x^i,y^i\}_{i=1}^{m}$ be $m$ disjoint edges of $\{0,1\}^m$. Let $s^i=i\;\forall i\in[m]$. Consider any $l\in[n]$. From the above discussion we get
\begin{equation}
P(v|h=x^l) = \prod_{i=1}^n P(v_i|x^l) = \prod_{i\neq l} \delta_{x^l_{s^i}}(v_i) \cdot p^l(v_l)\;, 
\end{equation}
which is an arbitrary distribution with support on the edge given by fixing $v_i=x_i^l\,\forall i\neq l$.  For $h\not\in\cup_{i=1}^m\{x^i, y^i\}$ and $s^i=i \,\forall i$ we get 
\begin{equation}
P(v|h\neq x^l,y^l\, \forall l) = \prod_{i=1}^n P(v_i|h) = \prod_{i } \delta_{h_{s^i}}(v_i) = \delta_{h}(v)\;, 
\end{equation}
which is the point measure on $\{v=h\}$. 
\qedhere
\end{itemize}
\end{proof}

\begin{example}
Figure~\ref{kernels} gives an example of zonoset kernels $K_{W,B}=\operatorname{K}_p$ in $\Kcal_{4,4}$ for $p$ the uniform distributions on faces of $\{0,1\}^4$. 
\end{example}
\begin{figure}
\begin{center}
\setlength{\unitlength}{13cm}
\begin{picture}(1,1)(0,0)
\put(0,0){\includegraphics[trim=2cm 5.3cm 2cm 0cm, clip=true, width=\unitlength]{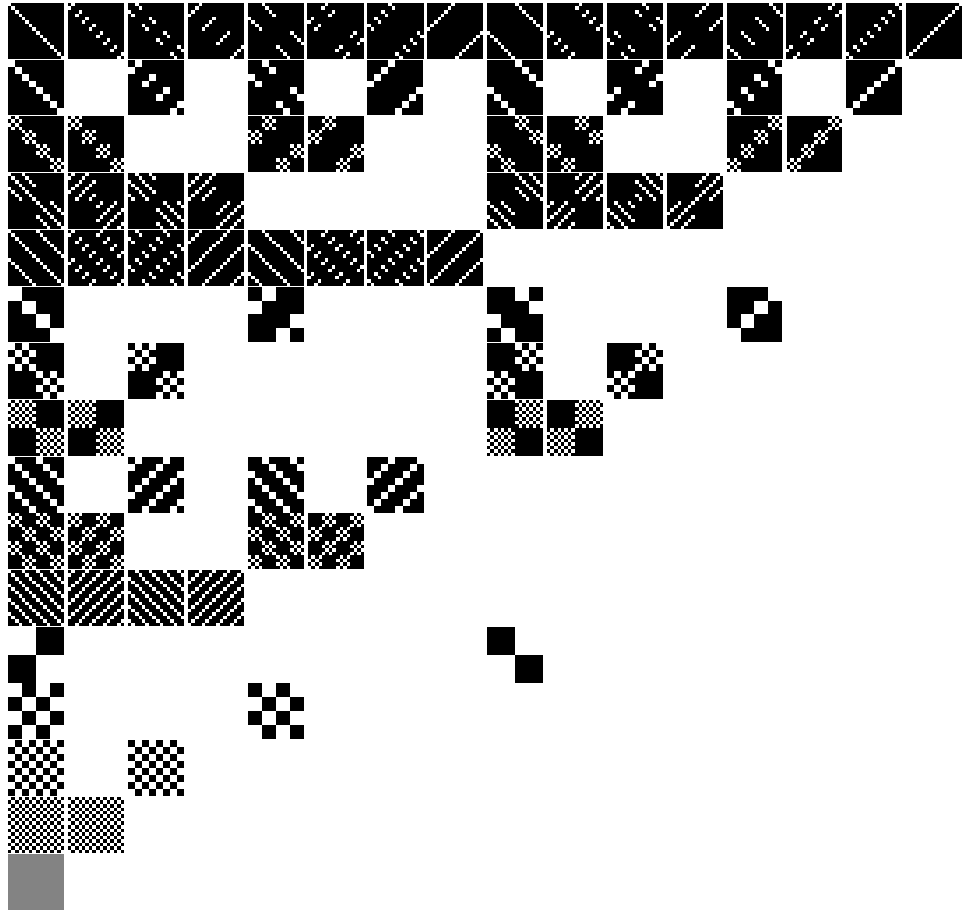}} 
\end{picture}

\end{center}
\caption{The kernels $\operatorname{K}_p$ for $p$ the uniform distributions on  faces of $\{0,1\}^4$ of dimension 
zero (first line), 
one (the next four lines; one line for each possible edge orientation), 
two (the next six lines; one for each pair in $\{1,2,3,4\}$), 
three (the next four lines), 
and four dimensional ($p$ is the uniform distribution on $\{0,1\}^4$). 
The first row of each kernel is always equal to the probability distribution $p$. 
The rows and columns of each kernel are in the lexicographical order of $\{0,1\}^4$. 
By Proposition~\ref{markovkernels}, all these kernels are contained in the family $\Kcal_{4,4}$. 
}\label{kernels}
\end{figure}

\end{document}